%% file: acl_latex.tex
\documentclass[11pt]{article}

\usepackage[final]{acl}

\usepackage{times}
\usepackage{latexsym}

\usepackage[T1]{fontenc}

\usepackage[utf8]{inputenc}

\usepackage{microtype}
\usepackage[table]{xcolor}  
\usepackage{inconsolata}
\usepackage[ruled,vlined]{algorithm2e}
\usepackage{float}
\usepackage{tabularx}
\usepackage{multirow}
\usepackage{graphicx}
\usepackage{pifont}
\usepackage{amsmath,amssymb,amsfonts}
\usepackage{amsthm}
\usepackage{booktabs} 
\newtheorem{theorem}{Theorem}
\newtheorem{lemma}{Lemma}
\newtheorem*{theorem*}{Theorem}

\usepackage{arydshln}
\usepackage{makecell}
\usepackage{subcaption}
\usepackage{xcolor}
\newcommand{\tablestyle}[2]{\setlength{\tabcolsep}{#1}\renewcommand{\arraystretch}{#2}\centering\footnotesize}

\title{BWLA: Breaking the Barrier of W1AX Post-Training Quantization for LLMs}

\author{
  \textbf{Zhixiong Zhao}\textsuperscript{1}\thanks{~~Equal contribution.}\thanks{~~This work was conducted during his internship at Houmo AI.} ,
  \textbf{Zukang Xu}\textsuperscript{1}\footnotemark[1] ,
  \textbf{Dawei Yang}\textsuperscript{1}\thanks{~~Corresponding author.} \\
  \\
  \textsuperscript{1}Houmo AI\\
  \small{\textbf{Correspondence:} \href{mailto:zukang.xu@houmo.ai}{zukang.xu@houmo.ai}, \href{mailto:dawei.yang@houmo.ai}{dawei.yang@houmo.ai}}
}

\begin{document}
\maketitle

\input{sec/0_abstract}
\input{sec/1_intro}
\input{sec/2_related-work}
\input{sec/3_method}
\input{sec/4_experiments}
\input{sec/5_conclusion}

\clearpage
\section*{Limitations}
While BWLA establishes a new state of the art for high-accuracy W1AX PTQ, several limitations remain. First, although the method maintains stable performance at W1A6, model stability noticeably degrades at more extreme activation bit-widths such as W1A4, indicating that current activation smoothing and outlier suppression are insufficient for this challenging regime. Second, the core OKT module relies on linear orthogonal rotations, which, despite their efficiency, may be too restrictive to fully capture the non-linear geometry of modern LLM weight spaces; lightweight non-linear transformations may offer further gains. Finally, BWLA currently targets standard integer formats (e.g., INT4/INT6), and extending the optimization to mixed-precision or emerging low-precision floating-point formats such as MXFP4 could broaden applicability and improve dynamic-range handling.

\section*{Ethics Statements}
This paper introduces solutions to the challenges associated with Large Language Models (LLMs) quantization, with the overarching goal of facilitating the widespread adoption and application of LLMs. In the current landscape, ethical concerns tied to LLMs, including the presence of hidden biases encoded in the models, are garnering heightened attention. Following our investigation, we assert that our proposed method does not further amplify the biases and contravene any ethical standards.

\bibliography{custom}

\clearpage
\appendix

\section{Appendix}
\label{sec:appendix}

\input{appendix/A.1}
\input{appendix/A.2}
\input{appendix/A.3}

\input{appendix/A.4}

\end{document}

%% file: sec/0_abstract.tex
\begin{abstract}
Large language models (LLMs) have driven major progress in NLP, yet their substantial memory and compute demands still hinder practical deployment. Binarization can compress weights to 1 bit, fundamentally lowering compute and bandwidth cost. However, existing methods cannot address activation heavy tails and thus must keep activations in high precision, preventing true end-to-end acceleration. To overcome this limitation, we propose \textbf{BWLA} (\textbf{B}inarized \textbf{W}eights and \textbf{L}ow-bit \textbf{A}ctivations), the first post-training quantization framework that preserves high accuracy while achieving 1-bit weight quantization together with low-bit activations (e.g., 6 bits). The Orthogonal-Kronecker Transformation (OKT) learns an orthogonal mapping via EM minimization, converting unimodal weights into symmetric bimodal forms while suppressing activation tails and incoherence. The Proximal SVD Projection (PSP) then performs lightweight low-rank refinement through proximal SVD projection, further enhancing quantizability with minimal overhead. On Qwen3-32B, BWLA reaches a Wikitext2 perplexity of \textbf{11.92} under 6-bit activations (vs. 38 from SOTA), improves five zero-shot tasks by more than \textbf{70\%}, and delivers \textbf{3.26×} inference speedup, demonstrating strong potential for real-world LLM compression and acceleration. The code will be available at \href{https://github.com/Kishon-zzx/BWLA}{BWLA}.
\end{abstract}

%% file: sec/1_intro.tex
\section{Introduction}
In recent years, Transformer-based large language models have achieved substantial progress across a wide range of natural language processing tasks. Much of this improvement is driven by the continuous scaling of model size, with many state-of-the-art models reaching tens or even hundreds of billions of parameters. For example, the LLaMA family~\citep{llama3herdmodels} includes models of various scales, with LLaMA2-70B containing 70 billion parameters and requiring more than 120 GB of memory for inference at FP16 precision. Such resource requirements make deployment on mobile devices and other resource-constrained platforms highly challenging.

\begin{figure}
    \centering
    \includegraphics[width=0.9\linewidth]{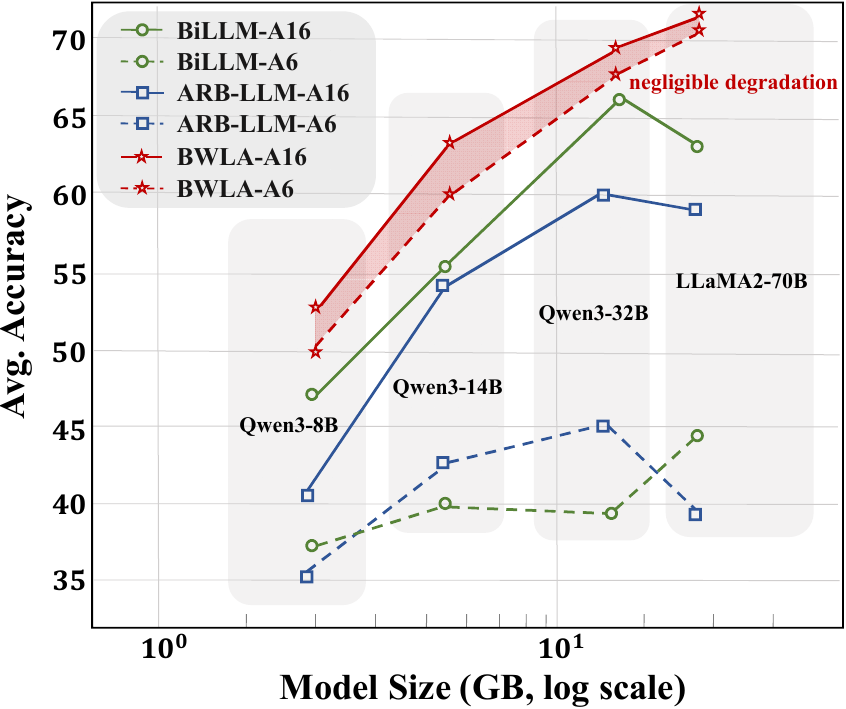}
    \caption{Performance of BWLA versus state-of-the-art binarization methods on five zero-shot QA benchmarks. BWLA remains robust under both weight-only and weight–activation quantization, while other methods degrade sharply with activation quantization.}
    \label{fig1}
\end{figure}

\begin{figure*}[t!]
    \centering
    \includegraphics[width=0.92\linewidth]{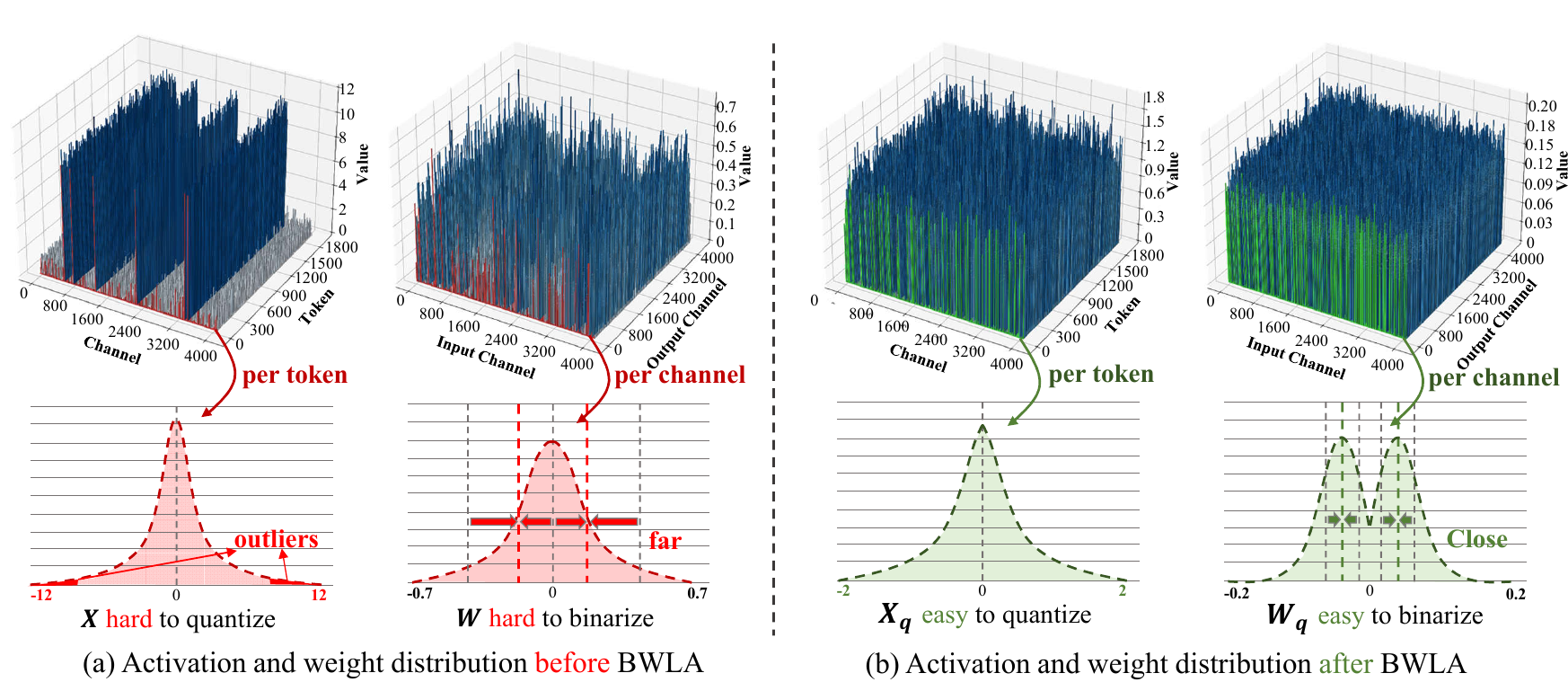}
    \caption{(a) Before applying BWLA, activations contain substantial outliers that hinder low-bit quantization, and unimodal weight distributions are poorly aligned with binarization. (b) After applying BWLA, activations are smoothed with far fewer outliers, facilitating low-bit quantization, and weight distributions are transformed into symmetric bimodal forms that better support binarization.}
    \label{fig2}
\end{figure*}

Quantization has become a central approach to model compression across various large-scale architectures, encompassing both dense models~\citep{zhao2025quark,zhao2026specquant} and Mixture-of-Experts (MoE) networks~\citep{xu2026kbvq}. Within this paradigm, binarization is particularly appealing because it reduces weight storage to a single bit. Recent studies such as BiLLM~\citep{huang2024billmpushinglimitposttraining} and ARB-LLM~\citep{li2024arbllmalternatingrefinedbinarizations} leverage local Hessians to estimate weight saliency and apply refined handling to salient weights, thereby mitigating the performance degradation introduced by binarization. However, compared with weights, activations are more prone to outliers and heavy-tailed distributions~\citep{ashkboos2024quarotoutlierfree4bitinference}. Existing binarization methods predominantly focus on weight-only designs while overlooking the quantization characteristics of activations, leading to suboptimal performance when activations are quantized to low bit-widths (Figure~\ref{fig1}). As a consequence, such methods often retain full-precision activations to avoid numerical distortion, which necessitates weight dequantization during inference. This design incurs substantial computational and memory overhead, thereby limiting the achievable inference acceleration. BitNet a4.8~\citep{wang2024bitneta484bitactivations} addresses this issue by using resource-intensive quantization-aware training (QAT) to achieve 1-bit weights with 4-bit activations. These limitations underscore the need for post-training quantization (PTQ) methods that jointly consider weights and activations, aiming to minimize performance loss without retraining and to enable efficient end-to-end inference. Achieving 1-bit weights together with low-bit activations \textbf{(W1AX)}, while preserving model quality remains an important and technically demanding challenge.

We revisit the statistical characteristics of weights and activations in LLMs, as illustrated in Figure~\ref{fig2} (a). These characteristics present a fundamental challenge to W1AX: (1) Per-channel weight distributions are often unimodal (quasi-Gaussian), severely mismatched with the $\pm 1$ codebook, leading to substantial binarization error. (2) Activations exhibit pronounced heavy-tailed behavior with extreme outliers, which dominates quantization distortion at low bit-widths. These observations motivate a central question: \emph{within a post-training quantization framework, can we jointly \ding{182} reshape per-channel weight distributions into a bimodal form that better suits binarization, and \ding{183} suppress activation outliers to mitigate heavy tails, thereby facilitating both binary weight quantization and low-bit activation quantization?}

To this end, we propose \textbf{BWLA} (\textbf{B}inarized \textbf{W}eights and \textbf{L}ow-bit \textbf{A}ctivations), a post-training framework that jointly quantizes weights and activations, enabling near-1-bit weights and low-bit activations without end-to-end training. The core idea is to iteratively optimize an orthogonal transformation that reshapes unimodal weight distributions into symmetric bimodal forms. Leveraging the property $R^{-1} = R^\top$, the same transformation can be applied to activations while preserving forward-pass equivalence, thereby effectively suppressing activation outliers. Specifically, the \textbf{Orthogonal–Kronecker Transformation (OKT)} formulates a gradient-free objective that encourages bimodal clustering and solves the orthogonal matrix via EM-style conditional minimization. OKT constructs the transformation through Kronecker factorization into two small orthogonal matrices, incurring negligible additional computation and memory overhead in the PTQ setting. \textbf{Proximal SVD Projection (PSP)} applies a lightweight low-rank correction in the aligned coordinate space.
It absorbs remaining outliers through a proximal upper bound and truncated SVD projection, reinforcing bimodality with minimal parameter overhead. As shown in Figure~\ref{fig2}(b), BWLA transforms unimodal weights into nearly symmetric bimodal distributions and suppress activations outliers, achieving unified PTQ with near-1-bit weights and low-bit activations.

To summarize, our main contributions are:
\begin{itemize}
    \item We identify the W1AX bottleneck in LLMs: weight–codebook mismatch and heavy-tailed activations fundamentally limit existing PTQ methods from jointly reducing weight and activation precision.
    \item We propose BWLA, the first retraining-free PTQ framework that achieves 1-bit weights and low-bit activations quantization.
    \item Extensive experiments show that BWLA outperforms prior methods across multiple LLMs, achieving the first high-accuracy W1AX under pure PTQ.
\end{itemize}

%% file: sec/2_related-work.tex
\section{Related Work}
\paragraph{Post Training Quantization for LLMs}
Post-training quantization (PTQ) has become a mainstream technique for compressing and accelerating large language models due to its low calibration cost and lack of retraining. Existing methods are generally categorized into weight-only and joint weight–activation quantization.
Weight-only approaches reduce memory usage by quantizing parameters alone. GPTQ~\citep{frantar2023gptqaccurateposttrainingquantization} applies Hessian-guided error compensation, AWQ~\citep{lin2024awqactivationawareweightquantization} mitigates activation outliers, and QuIP\#~\citep{tseng2024quipbetterllmquantization} combines random Hadamard transforms with vector quantization to retain accuracy at low bit widths.
Joint weight–activation quantization enables end-to-end acceleration by compressing both weights and activations. SmoothQuant~\citep{xiao2024smoothquantaccurateefficientposttraining} transfers quantization difficulty from activations to weights via reversible scaling, while OmniQuant~\citep{shao2024omniquantomnidirectionallycalibratedquantization} jointly calibrates transformations and quantizers. Recent methods such as QuaRot~\citep{ashkboos2024quarotoutlierfree4bitinference}, SpinQuant~\citep{liu2025spinquantllmquantizationlearned}, and OSTQuant~\citep{hu2025ostquantrefininglargelanguage} further enhance robustness through random or learnable rotations. Nevertheless, unimodal weight distributions and heavy-tailed activations remain fundamental challenges for efficient low-bit PTQ.

\paragraph{Binarization for LLMs}
Binarization is an extreme low-bit technique that compresses model weights to binary values (e.g., -1/+1), substantially reducing memory and computation. However, the accuracy sensitivity of LLMs makes direct binarization challenging, particularly for attention blocks and large embedding layers. BinaryBERT~\citep{bai2021binarybertpushinglimitbert} alleviates degradation by retaining selected high-precision weights, while PB-LLM~\citep{shang2023pbllmpartiallybinarizedlarge} adopts partial binarization, preserving salient weights in full precision to maintain reasoning ability.
Recent work explores structured binarization, leveraging sparsity patterns and normalized importance scores for selective binarization and sparsification~\citep{huang2024billmpushinglimitposttraining}, as well as alternating refinement or column-group bitmaps to reduce quantization error and column bias~\citep{li2024arbllmalternatingrefinedbinarizations}. These advances improve the practicality of binarization and push the efficiency–accuracy frontier. Nevertheless, most methods remain weight-only and cannot achieve end-to-end acceleration. DBellQuant~\citep{ye2025dbellquantbreakingbelldoublebell} moves toward joint weight–activation quantization but relies mainly on learnable scaling and offers limited suppression of activation outliers. Achieving W1AX—1-bit weights with low-bit activations—under retraining-free PTQ therefore remains challenging, requiring simultaneous control of activation outliers, reduced activation precision, and accurate weight quantization.

%% file: sec/3_method.tex
\section{Method}

\begin{figure*}[htbp]
    \centering
    \includegraphics[width=0.95\linewidth]{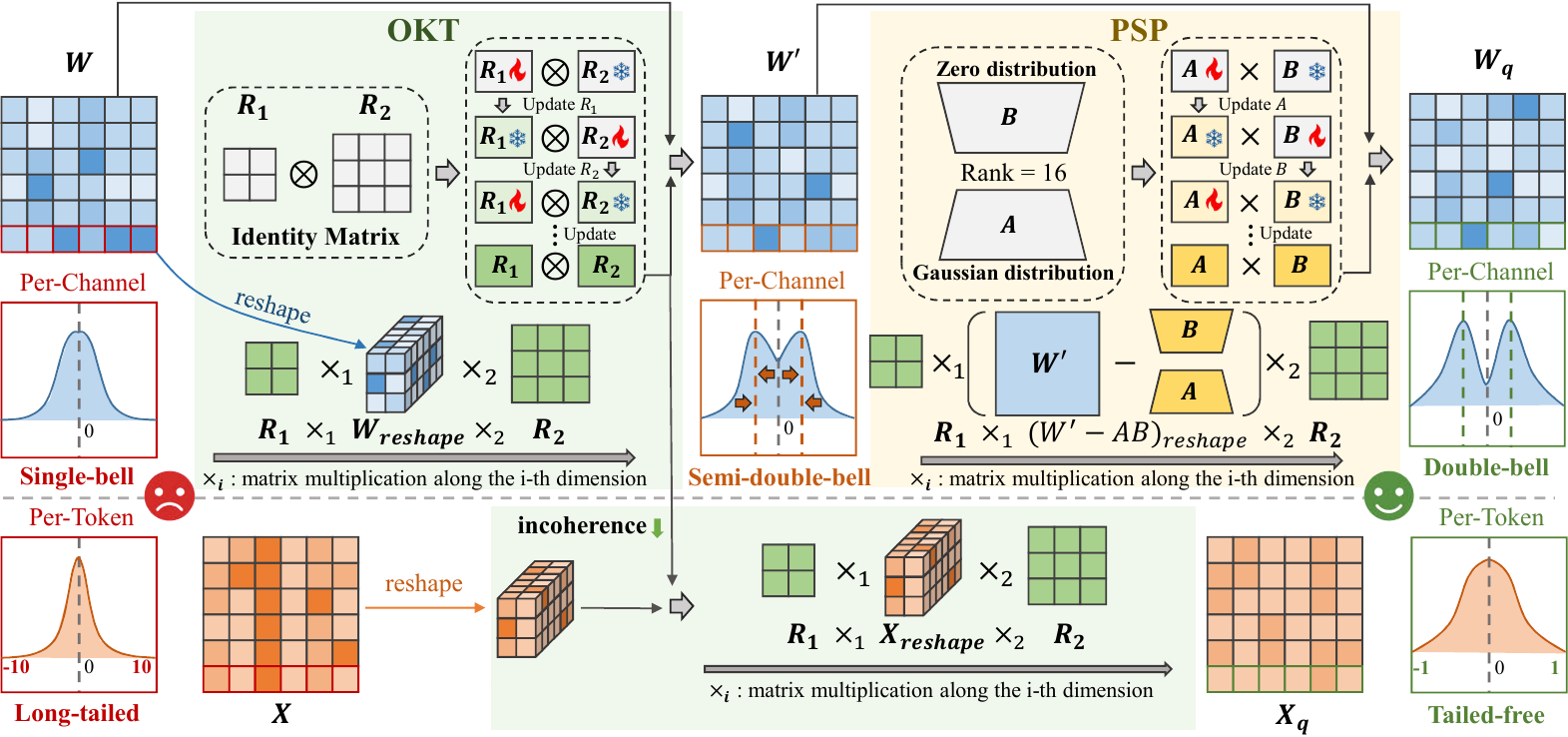}
    \caption{Illustration of the proposed BWLA. The \textbf{Orthogonal-Kronecker Transformation (OKT)} applies an orthogonal Kronecker rotation to reshape weights into a symmetric bimodal space while jointly suppressing long-tailed activation outliers. The \textbf{Proximal SVD Projection (PSP)} further strengthens the bimodal structure through a lightweight truncated SVD refinement, producing weight distributions explicitly optimized for binarization.}
    \label{fig:overview}
\end{figure*}

\paragraph{Discussion.}
We begin by revisiting the standard formulation of weight binarization in LLMs. 
Given a weight matrix $W \in \mathbb{R}^{n \times m}$, since the mean of each output channel (row) is generally non-zero, binarization typically applies row-wise centering prior to quantization. 
For the $i$-th output channel (row), let $\mu_i$ and $\delta_i$ denote the shift and scaling factors, respectively. The centered and binarized weights are computed as:
\begin{align}
\mu_i = \frac{1}{m}\sum_{j=1}^{m} W_{i,j},& \quad \delta_i = \frac{1}{m}\sum_{j=1}^{m} \bigl|W_{i,j} - \mu_i\bigr|,\\
\tilde{W}_{i,j} &= \operatorname{Sign}\!\bigl(W_{i,j} - \mu_i\bigr), 
\end{align}
where
\begin{equation}
\operatorname{Sign}(x) =
\begin{cases}
+1, & x > 0,\\[2pt]
-1, & x \le 0.
\end{cases}
\end{equation}
The dequantized weights are then reconstructed as:
\begin{equation}
W_{\text{deq},i,j} = \tilde{W}_{i,j}\,\delta_i + \mu_i.
\end{equation}
Centering enforces a zero-mean distribution along each row, stabilizing the binarization process. 
However, prior studies indicate that LLM weights typically exhibit unimodal, quasi-Gaussian distributions within each channel. Direct binarization of such unimodal distributions inevitably induces substantial quantization error, severely degrading model performance. 
In contrast, a \textit{bimodal} distribution aligns naturally with the binary codebook ($\{-1, +1\}$), forming two distinct clusters that theoretically minimize quantization error (see Appendix~\ref{app:double-bell-binary}). 
Despite this theoretical advantage, practical LLM weights remain predominantly unimodal. Although QAT can implicitly induce bimodal distributions, applying QAT to LLMs typically incurs prohibitively high training costs, making it impractical in real-world settings.
To address these limitations, we propose \textbf{BWLA} (illustrated in Figure~\ref{fig:overview}), an efficient PTQ framework that explicitly transforms unimodal weights into bimodal forms while suppressing activation outliers—achieving both objectives without expensive retraining. The complete pseudocode is provided in Appendix~\ref{app:pseudocode}, and distribution visualizations are available in Appendix~\ref{app:distribution}.

\subsection{Orthogonal--Kronecker Transformation}
\label{sec:OKT}

\paragraph{Theoretical Feasibility via Orthogonal Bimodalization.}
Reshaping pre-trained unimodal weights into a bimodal distribution is crucial for joint quantization. While prior work~\citep{ye2025dbellquantbreakingbelldoublebell} explored arbitrary auxiliary matrices, guaranteeing their strict invertibility without destabilizing the forward pass is problematic. We circumvent this by utilizing the orthogonal group $\mathcal{O}(m)$, which guarantees cost-free inversion ($R^{-1} = R^\top$). We establish that strict orthogonality theoretically induces the desired distributional shift:

\begin{theorem}
\label{theo1}
Let $W\in\mathbb{R}^{n\times m}$ be a weight matrix with rows independently sampled from a unimodal Gaussian $w_i \sim \mathcal N(\mathbf{0},\sigma_i^{2}I_m)$~\citep{ye2025dbellquantbreakingbelldoublebell}. There exists an orthogonal transformation $R\in\mathcal O(m)$ such that the transformed weights $W' = WR$ converge to a symmetric bimodal Gaussian mixture:
\begin{equation}
w'_i \sim \pi\,\mathcal N(\mu,\sigma^{2}) + (1-\pi)\,\mathcal N(-\mu,\sigma^{2}),
\label{eq:gaussian}
\end{equation}
where $\pi \in (0,1)$ is the mixture coefficient, and $\mu, \sigma > 0$. \textit{Proof.} See Appendix~\ref{app:proof-aux}.
\end{theorem}

While Theorem~\ref{theo1} proves theoretical feasibility, deploying a dense $m\times m$ orthogonal matrix in LLMs incurs prohibitive $\mathcal{O}(m^2)$ memory and $\mathcal{O}(nm^2)$ computational overhead. Furthermore, an unconstrained $R$ is agnostic to activation statistics, lacking explicit mechanisms to suppress heavy-tailed outliers or ensure numerical stability under extreme low-bit constraints. These limitations necessitate a shift towards a highly structured, hardware-aware transformation.

\paragraph{Orthogonal-Kronecker Transformation (OKT).} 
To enable lightweight yet strictly orthogonal mixing, we propose the Orthogonal-Kronecker Transformation (OKT). A full orthogonal matrix $R \in \mathbb{R}^{m \times m}$ incurs prohibitive $\mathcal{O}(m^2)$ computational and memory overhead. To circumvent this, OKT imposes a Kronecker structure by factorizing the transformation as $R = R_1 \otimes R_2$, where $R_1 \in \mathbb{R}^{n_1 \times n_1}$ and $R_2 \in \mathbb{R}^{n_2 \times n_2}$ ($n_1 n_2 = m$).

By reshaping a weight row $v \in \mathbb{R}^{1 \times m}$ into a matrix $V_{\text{mat}} \in \mathbb{R}^{n_1 \times n_2}$, the transformation can be efficiently evaluated via $R_2^{\top} V_{\text{mat}} R_1$~\citep{xiao2025singlequantefficientquantizationlarge}. When $n_1$ and $n_2$ are optimally chosen to be approximately $\sqrt{m}$, this structural constraint drastically cuts the computational complexity down to $\mathcal{O}(m^{3/2})$ and reduces the memory footprint by orders of magnitude (e.g., storing two $64 \times 64$ matrices instead of one $4096 \times 4096$ matrix). Beyond computational efficiency, the strict orthogonality of OKT is crucial: it exactly preserves forward-pass invariance while effectively dispersing activation coherence. Consequently, OKT provides a unified mechanism to naturally suppress heavy-tailed outliers and facilitate subsequent weight bimodalization with minimal overhead.

\paragraph{Transformation Objectives.}
To explicitly induce bimodality, we first center each transformed row:
\begin{equation}
x_i = (w_i R)-\tfrac{1}{m}\mathbf{1}\mathbf{1}^\top (w_i R), 
\end{equation}
where $\mathbf{1}\in\mathbb{R}^{m}$ is the all-ones vector, such that the resulting entries $x_{ij}$ are symmetric around zero. Each row is modeled by a symmetric two-component mixture distribution:
\begin{equation}
\begin{split}
p(x_{ij}&\mid c_i,\sigma_i^2)
= 
\\
&\tfrac12\,\phi(x_{ij};+c_i,\sigma_i^2)
+ \tfrac12\,\phi(x_{ij};-c_i,\sigma_i^2),
\end{split}
\end{equation}
where $p(\cdot)$ denotes the probability density function, 
$\phi(\cdot;\mu,\sigma^2)$ is a Gaussian density with mean $\mu$ and variance $\sigma^2$, 
and $\Theta_i = \{c_i, \sigma_i^2\}$ represents the distribution parameters for the $i$-th row. The negative log-likelihood objective is:
\begin{equation}
\begin{split}
L&_{\mathrm{GMM}}
= -\frac{1}{nm} \sum_{i,j}
\log \\
&\left[ 
\tfrac12\phi(x_{ij};c_i,\sigma_i^2)
+ \tfrac12\phi(x_{ij};-c_i,\sigma_i^2)
\right].
\end{split}
\end{equation}
To prevent degenerate solutions (e.g., $c_i \to 0$ or mode collapse), we introduce a regularization term on the posterior responsibilities of the positive component.
Specifically, let $r_{ij}^+$ denote the posterior probability that entry $x_{ij}$ is assigned to the positive mode, and define the row-wise average $\bar r_i = \tfrac{1}{m}\sum_j r_{ij}^+$ .
By penalizing the deviation of $\bar r_i$ from $1/2$, we encourage balanced assignments between the two modes. The resulting optimization objective is
\begin{equation}
\min_{R\in\mathcal O(m),\,\Theta}\;
L_{\mathrm{GMM}}
+ \lambda_{\text{reg}}\left(\tfrac1n\sum_{i=1}^{n}\bar r_i-\tfrac12\right)^2 ,
\end{equation}

We employ an alternating optimization strategy. For fixed $R$, $\Theta_i$ is updated via closed-form EM steps~\citep{dempster1977maximum}. For fixed $\Theta_i$, $R_1$ and $R_2$ are updated via Majorization-Minimization (MM)~\citep{hunter2004tutorial}, which decomposes into weighted orthogonal Procrustes problems. Specifically, $R_1$ is updated as
\begin{equation}
\begin{split}
C_1 = \sum_{i=1}^{n} \lambda_i &V_{\mathrm{mat},i}^{\top} R_2 M_i,\quad R_1^{\mathrm{new}} = UV^\top, \\
\quad &\text{where } U\Sigma V^\top=\mathrm{SVD}(C_1),
\end{split}
\end{equation}
The update for $R_2$ follows analogously. Detailed derivations and convergence guarantees are provided in Appendix~\ref{app:okt-em-mm}.


\input{tab/tab_1}

\subsection{Proximal SVD Projection}
\label{sec:PSP}
While OKT aligns weights with a bimodal geometry globally, orthogonal rotations alone cannot eliminate structured residuals that remain misaligned with the target centers $\pm c_i$. To address this, we introduce \textbf{Proximal SVD Projection (PSP)}, a lightweight refinement step.

We parameterize a rank-$k$ residual matrix $M=AB$ ($k \ll \min\{\mathrm{oc},\mathrm{ic}\}$). The corrected weight is $W_{\mathrm{res}}=W-M$. We define the OKT-centered projection as:
\begin{equation}
T_R(W)= (W R)\,H, \quad \text{with } H = I - \tfrac1m \mathbf{1}\mathbf{1}^\top.
\end{equation}
The transformed weights $D = T_R(W-M)$ are optimized under the same GMM objective. Since $M$ acts as an affine shift, it does not alter the closed-form nature of the EM updates for $\Theta$, ensuring:
\begin{equation}
L_{\mathrm{GMM}}(\Theta^{(t+1)}; \dots) \le L_{\mathrm{GMM}}(\Theta^{(t)}; \dots).
\end{equation}

To update $M$, we treat it as the primary variable with a constraint $\operatorname{rank}(M)\le k$. Let $G_D = \partial L/\partial D$ be the gradient in the transformed space. The adjoint operator maps this gradient back to the residual domain:
\begin{equation}
G = T_R^{*}(G_D) = G_D H^\top R^\top = G_D H R^\top.
\end{equation}
We construct a proximal majorizer $Q_{\mu}$ at step $t$:
\begin{equation}
\resizebox{\hsize}{!}{$
Q_{\mu}(M|M_t) = L(M_t) + \langle G, M-M_t \rangle + \frac{\mu}{2}\|M-M_t\|_F^2.
$}
\end{equation}
Minimizing $Q_\mu$ implies solving the proximal operator:
\begin{equation}
M_{t+1} = \arg\min_{\operatorname{rank}(M)\le k} \left\| M - (M_t - \tfrac{1}{\mu} G) \right\|_F^2.
\end{equation}
Defining $Y_t = M_t - \tfrac{1}{\mu} G$, the optimal solution is given by the truncated SVD:
\begin{equation}
Y_t \approx U_k\Sigma_kV_k^\top \implies M_{t+1} = U_k\Sigma_k V_k^\top.
\end{equation}
This update guarantees monotonic decrease in the objective $L(W-M_{t+1}) \le L(W-M_t)$ without manual step-size tuning. PSP effectively absorbs residual outliers at negligible parameter cost. Full derivations are in Appendix~\ref{app:psp-details}.

%% file: tab/tab_1.tex
\begin{table*}[h!]
\centering
\resizebox{\textwidth}{!}{%
\tablestyle{2pt}{1.2}
\begin{tabular}{l|c|c|*{2}{cc:}cc|cc|cc:cc:cc}
\toprule
\multirow{3}{*}{\textbf{Method}} & \multirow{3}{*}{\makecell{\textbf{\#Bits}\\ (W)}} & \multirow{3}{*}{\makecell{\textbf{\#Bits}\\ (A)}}    &
\multicolumn{2}{c}{\textbf{LLaMA2-7B}} &
\multicolumn{2}{c}{\textbf{LLaMA2-13B}} &
\multicolumn{2}{c}{\textbf{LLaMA2-70B}} &
\multicolumn{2}{c}{\textbf{LLaMA3-8B}} &
\multicolumn{2}{c}{\textbf{Qwen3-8B}} &
\multicolumn{2}{c}{\textbf{Qwen3-14B}} &
\multicolumn{2}{c}{\textbf{Qwen3-32B}} \\
\cdashline{4-17}
 & & & 0-shot$^5$ & Wiki & 0-shot$^5$ & Wiki & 0-shot$^5$ & Wiki & 0-shot$^5$ & Wiki & 0-shot$^5$ & Wiki & 0-shot$^5$ & Wiki & 0-shot$^5$ & Wiki \\
 & & & Avg.($\uparrow$) &($\downarrow$) & Avg.($\uparrow$) &($\downarrow$) & Avg.($\uparrow$) &($\downarrow$) & Avg.($\uparrow$) &($\downarrow$) & Avg.($\uparrow$) &($\downarrow$) & Avg.($\uparrow$) &($\downarrow$) & Avg.($\uparrow$) &($\downarrow$)\\
\hdashline
FP16       & 16   & 16 & 68.96 & 5.47 & 71.71 & 4.88 & 76.55 & 3.32 & 72.67 & 6.14 & 71.47 & 9.00 & 75.00 & 8.64 & 76.33 & 7.61 \\
\hdashline
RTN        & 2    & \multirow{7}{*}{16} & 35.76 & 6e4 & 36.20 & 9e3 & 35.87 & 1e4 & 36.15 & 2e6 & 35.39 & 7e6 & 35.77 & 1e7 & 35.57 & 1e7 \\
GPTQ       & 2    &  & 36.06 & 5e4 & 35.68 & 9e3 & 35.96 & 2e4 & 35.92 & 8e5 & 35.27 & 8e3 & 35.72 & 1e5 & 35.72 & 1e5 \\
OSTQuant   & 2    &  & 37.71 & 3e2 & 39.58 & 8e2 & 42.54 & 2e2 & 37.58 & 9e2 & 37.92 & 2e2 & 41.08 & 3e2 & 44.36 & 4e2 \\
BiLLM      & 1.07 &  & 40.91 & 19.40 & 44.38 & 13.39 & 58.76 & 8.83 & 38.13 & 37.66 & 41.58 & 44.04 & 54.34 & 18.07 & 60.26 & 13.23 \\
ARB-LLM    & 1.07 &  & 43.22 & 17.06 & 52.04 & 10.75 & 63.36 & 7.03 & 43.80 & 22.37 & 47.67 & 21.50 & 55.79 & 15.05 & 66.51 & 11.67 \\
DBellQuant & 1.10 &  & 44.36    & 17.91 & --    & 12.79 & --    & 6.84   & 43.12  & -- & --    & --   & --    & --    & -- & --    \\
\rowcolor{gray!10}
\textbf{BWLA} & 1.16 &  & \textbf{52.46} & \textbf{9.96} & \textbf{62.38} & \textbf{7.12} & \textbf{73.78} & \textbf{5.03} & \textbf{48.81} & \textbf{15.21} & \textbf{52.11} & \textbf{16.29} & \textbf{62.65} & \textbf{12.66} & \textbf{68.29} & \textbf{10.72} \\
\hdashline
RTN        & 2    & \multirow{7}{*}{6}  & 36.20    & 6e4  & 36.58    & 1e4 & 36.35    & 1e4 & 35.51    & 2e6 & 36.12    & 7e6 & 36.02    & 3e6 & 35.55    & 5e4 \\
GPTQ       & 2    &   & 36.16    & nan   & 35.95    & 8e3 & 35.45    & nan  & 34.92    & 1e6 & 34.38    & 1e7 & 34.38    & 6e6 & 34.36    & 4e5 \\
OSTQuant   & 2    &   & 37.80    & 9e2  & 37.76    & 6e2 & 36.89    & 3e2 & 36.45    & 3e2 & 34.75    & 3e2 & 36.77    & 8e3 & 37.29    & 6e3 \\
BiLLM      & 1.07 &   & 38.86    & 38.00 & 40.84   & 33.24 & 37.93   & 26.72 & 38.36   & 59.77 & 35.89   & 434.86 & 42.25   & 765.25 & 45.95   & 485.05 \\
ARB-LLM    & 1.07 &   & 40.45    & 24.98 & 45.44   & 14.12 & 44.49   & 19.47 & 41.22   & 28.24 & 36.48   & 215.09 & 40.56   & 1e4   & 38.86   & 38.00 \\
DBellQuant & 1.10 &   & 41.32    & 21.69 & --   & 14.39 & --   & 7.56 & 39.64 & -- & --   & --    & --   & --    & --   & --    \\
\rowcolor{gray!10}
\textbf{BWLA} & 1.16 &   & \textbf{45.90} & \textbf{12.19} & \textbf{60.07} & \textbf{7.60} & \textbf{72.38} & \textbf{5.35} & \textbf{45.79} & \textbf{17.94} & \textbf{50.46} & \textbf{17.78} & \textbf{60.07} & \textbf{13.80} & \textbf{67.17} & \textbf{11.92} \\
\bottomrule
\end{tabular}}
\caption{Comparison of perplexity on WikiText2 and averaged accuracy on five Zero-Shot tasks(Arc-Challenge, Arc-Easy , HellaSwag , PIQA, and WinoGrande). Full results are in the Appendix~\ref{app:detailed main results}}
\label{tab:main_results}
\end{table*}

%% file: sec/4_experiments.tex
\section{Experiments}
\label{sec:Experiments}

\subsection{Experiment setup}
\paragraph{Models and Datasets.}
We conduct experiments on the LLaMA families~\citep{touvron2023llamaopenefficientfoundation} and the Qwen3 family~\citep{qwen3technicalreport}. In addition, we evaluate instruction-tuned variants to further demonstrate the effectiveness of our method. 
Beyond standard perplexity evaluation on Wikitext2~\citep{wikitext2} and C4~\citep{c4}, we assess BWLA on a broad set of zero-shot tasks, including ARC-Challenge and ARC-Easy~\citep{clark2018thinksolvedquestionanswering}, HellaSwag~\citep{zellers2019hellaswagmachinereallyfinish}, LAMBADA-openai and LAMBADA-standard~\citep{paperno2016lambadadatasetwordprediction}, PIQA~\citep{bisk2019piqareasoningphysicalcommonsense}, and WinoGrande~\citep{sakaguchi2019winograndeadversarialwinogradschema}. 
We further evaluate BWLA on more challenging reasoning benchmarks, including the multi-domain knowledge task MMLU~\citep{mmlu}, the mathematical reasoning benchmark GSM8K~\citep{gsm8k}, and the code generation benchmark HumanEval~\citep{humaneval}.

\input{tab/tab_2}
\input{tab/tab_3}
\input{tab/tab_4}

\paragraph{Baseline Methods.} 
We primarily compare BWLA against BiLLM~\citep{huang2024billmpushinglimitposttraining}, ARB-LLM~\citep{li2024arbllmalternatingrefinedbinarizations}, and DBellQuant~\citep{ye2025dbellquantbreakingbelldoublebell}, where DBellQuant represents the current SOTA PTQ approach applied to binary LLMs. We also include several recent PTQ baselines, such as RTN (round-to-nearest), GPTQ~\citep{frantar2023gptqaccurateposttrainingquantization}, and OSTQuant~\citep{hu2025ostquantrefininglargelanguage}, which are among the strongest low-bit quantization methods.

\paragraph{Implementation Details.} 
All experiments are conducted on NVIDIA A6000 GPUs. We apply per-token asymmetric quantization for activations, while weights are quantized using symmetric per-channel quantization. For calibration, 128 text segments are sampled from the Wikitext2. We set the total number of BWLA iterations to 60, with 40 for OKT and 20 for PSP (corresponding loss curves are shown in Appendix~\ref{app:loss}). As BWLA is an efficient PTQ framework, no fine-tuning is required. For the three complex reasoning tasks, MMLU, GSM8k, HumanEval, and several other zero-shot tasks, we use the open-source tool lm-evaluation-harness~\citep{gao2024lmeval} for assessment.

\subsection{Main Results}

\paragraph{Comparison Results.}
We systematically evaluate binary performance across multiple LLM families under varying activation bit-widths. For fair comparison, non-binarization PTQ baselines (RTN, GPTQ, and OSTQuant) use 2-bit weights. Table~\ref{tab:main_results} shows that BWLA consistently outperforms all competitors on both LLaMA and Qwen3. Under weight-only binarization (A16), conventional low-bit methods collapse even at 2-bit weights, underscoring their limitations for extremely low-bit LLMs.
Compared with current state-of-the-art binary methods, BWLA improves average accuracy by 13\% and reduces perplexity by 28\%. When activation precision is reduced to 6 bits (A6), existing binary approaches degrade sharply: BWLA achieves up to 37\% lower perplexity than DBellQuant on LLaMA and, more importantly, over 50\% average accuracy gains where BiLLM and ARB-LLM nearly collapse on the Qwen3 family. This stability, achieved with less than 0.1 bit increase in effective weight precision, demonstrates BWLA’s robustness and efficiency as a practical solution for high-accuracy W1AX quantization. Detailed results are provided in Appendix~\ref{app:detailed main results}.

\begin{figure*}[t!]
    \centering
    \includegraphics[width=0.75\linewidth]{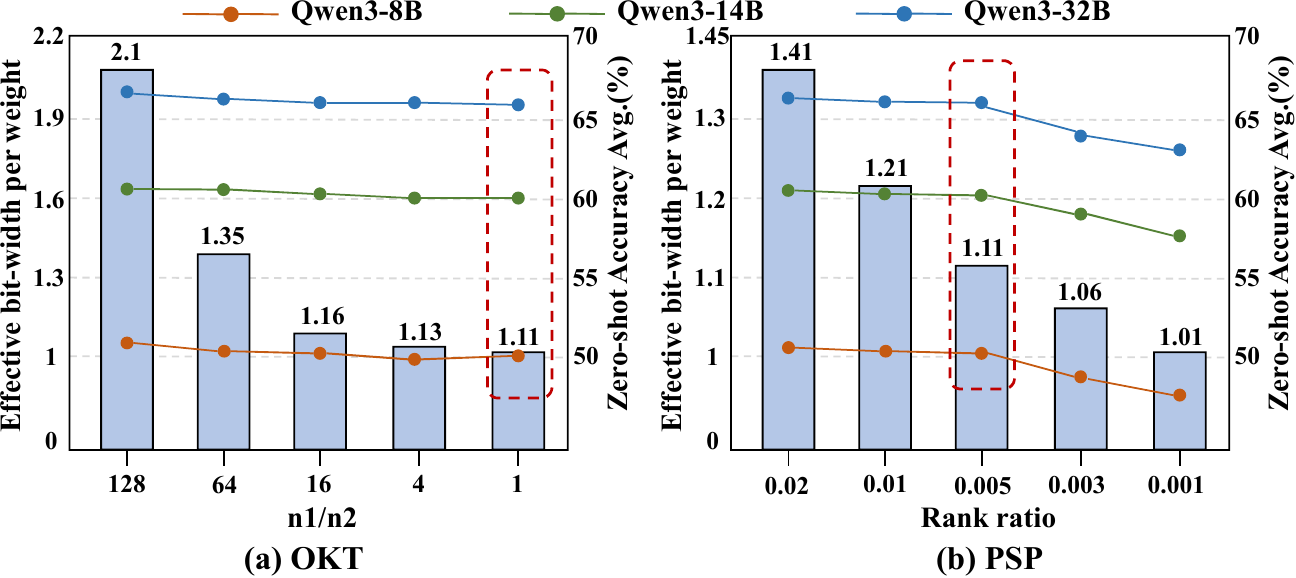} 
    \caption{Ablation of the Overhead–Performance trade-off for OKT and PSP. The results show that the optimal balance is achieved when OKT uses $n_1/n_2$ = \textcolor[HTML]{8B0000}{1} and PSP adopts a rank ratio of \textcolor[HTML]{8B0000}{0.005}.}
    \label{fig:overhead}
\end{figure*}

\paragraph{Experiments of Instruction-tuned Models.}
Instruction tuning greatly improves the practical utility of LLMs but also makes quantization more challenging than for base models. We evaluate Qwen3-32B-Instruct on three reasoning benchmarks (Table~\ref{tab:qwen32b_instruct}). With FP16 activations, BWLA retains about 75\% of full-precision performance, recovering most reasoning capability and even outperforming GPTQ with 3-bit weights while using less than half of its memory footprint.
When activations are further quantized to 6 bits, prior methods nearly collapse: MMLU accuracy approaches random-guess levels (25\%), and both HumanEval and GSM8K drop to zero. In contrast, BWLA preserves approximately 94\% of its performance relative to the unquantized-activation setting, demonstrating strong quantization quality under joint weight–activation compression.

\subsection{Ablation Studies}
BWLA integrates two key components, OKT and PSP, each designed to reduce quantization error in LLMs. Our ablation study is organized into three parts: (i) evaluating performance under aggressive 4-bit activation quantization, (ii) assessing the individual contribution of each module, and (iii) analyzing the Overhead–Performance Trade-off within the OKT and PSP components. The ablation studies on the impact of calibration data are presented in Appendix~\ref{app:dataset}.

\paragraph{Lower Activation Bitwidth.}
Under a more aggressive 4-bit activation quantization setting, we evaluate the generalizability of our method on LLaMA2-7B, LLaMA3-8B, and Qwen3-14B. As shown in Table~\ref{tab:w1a4}, GPTQ, BiLLM, and ARB-LLM all collapse across the seven zero-shot tasks, with perplexities commonly exceeding the 1e4 range. In contrast, BWLA delivers over a 50\% average improvement in zero-shot accuracy and reduces perplexity by more than 99.9\%, achieving an average value of only 43.26, demonstrating remarkable robustness and strong overall performance.

\paragraph{Modular Sensitivity Study.}
We analyze the individual and combined effects of OKT and PSP by evaluating average perplexity on WikiText2 and C4, together with zero-shot accuracy on seven benchmarks: Arc-Challenge, Arc-Easy, HellaSwag, LAMBADA-openai, LAMBADA-standard, PIQA, and WinoGrande.
As shown in Table~\ref{tab:method_ablation}, (i) each module alone improves both perplexity and zero-shot accuracy, with OKT yielding the larger gains, and (ii) combining the two modules produces the best results. Notably, on Qwen3-32B with 6-bit activation quantization, our joint design reduces average perplexity by 99.9\% and improves zero-shot accuracy by over 90\% compared to the baseline, highlighting the strong synergy and necessity of OKT and PSP.

\paragraph{Overhead–Performance Trade-off.}
We ablate the orthogonal matrix size in OKT and the truncated SVD rank ratio in PSP to examine the overhead–performance trade-off. As shown in Fig.~\ref{fig:overhead}(a), fixing the PSP rank ratio to 0.005 and varying the Kronecker dimensions $n_1$ and $n_2$ shows that decreasing $n_1/n_2$ (i.e., increasing symmetry) substantially reduces the effective weight bit-width introduced by OKT with minimal performance loss. On the Qwen3 family, greater symmetry lowers OKT overhead from about 1 bit to a negligible 0.01 bit, with only a $\sim$3\% drop in average zero-shot accuracy.
Figure~\ref{fig:overhead}(b) reports the PSP ablation. Reducing the rank ratio from 0.02 to 0.005 causes only a $\sim$1\% performance loss while saving about 0.3 bits of overhead, whereas further reduction leads to rapid degradation. On Qwen3-8B, decreasing the ratio from 0.005 to 0.001 results in over a 10\% accuracy drop while saving merely $\sim$0.1 bit, with similar trends across models. Overall, a rank ratio of 0.005 achieves the best overhead–performance balance and is the preferred PSP setting.

\begin{figure*}[t!]
    \centering
    \includegraphics[width=0.9\linewidth]{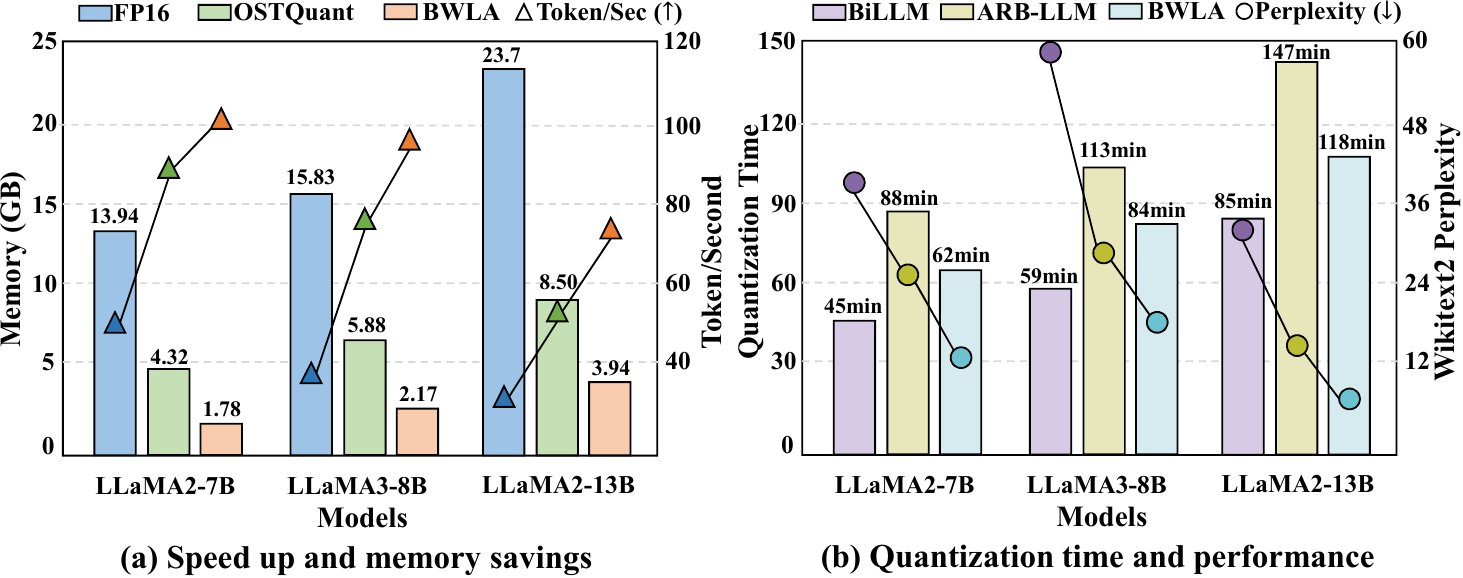}
    \caption{Efficiency Analysis. (a) Comparison of throughput (Tokens/Sec) and memory consumption across FP16, OSTQuant, and BWLA. (b) Comparison of quantization time and perplexity across BiLLM, ARB-LLM, and BWLA.}
    \label{fig:efficiency}
\end{figure*}

\subsection{Efficiency Analysis of BWLA}

\paragraph{Speedup and Memory Savings.} 
We benchmark the efficiency of BWLA on NVIDIA RTX A6000 GPUs using LLaMA families. We compare FP16, a W4A4 model quantized by OSTQuant, and a W1A8 model quantized by BWLA in terms of throughput (tokens/s) and memory usage. Under a batch size of 4, a 1024-token prefill, and 256-token decoding, Fig.~\ref{fig:efficiency}(a) shows that decoding is typically memory-bound; thus, reducing weight precision to 1 bit substantially lowers memory traffic compared to 4-bit quantization, leading to faster inference. On LLaMA2-13B, OSTQuant increases throughput from 23.70 tokens/s (FP16) to 55.35 tokens/s, while BWLA boosts it to 77.31 tokens/s—achieving a 3.26$\times$ speedup over FP16 and a 1.40$\times$ speedup over OSTQuant. Meanwhile, parameter memory is reduced from 23.7 GB to 3.94 GB, saving over 80\% of storage. These results highlight the significant efficiency gains enabled by BWLA.

\paragraph{Quantization Time Comparison.}
As a fine-tuning-free PTQ framework, BWLA applies OKT and PSP to each weight matrix entirely offline, converging in only a few dozen iterations (see Appendix~\ref{app:loss}) with no runtime overhead. Figure~\ref{fig:efficiency}(b) shows that BWLA’s quantization time is substantially lower than ARB-LLM, whose column-group bitmaps rely on costly percentile-based searches. Compared with BiLLM, which lacks iterative updates, BWLA is only about 25 minutes slower on average across the three evaluated models.
Despite this modest overhead, under W1A6 quantization BWLA reduces WikiText2 perplexity by over 70\% relative to BiLLM and by more than 50\% compared to the slower ARB-LLM, demonstrating strong advantages in both quantization efficiency and final accuracy.

%% file: tab/tab_2.tex
\begin{table}[t]
\centering
\resizebox{0.48\textwidth}{!}{%
\tablestyle{2pt}{1.2}
\begin{tabular}{clccccc}
\toprule
\textbf{Model} & \textbf{Method} & \textbf{\#Bits(W)} & \textbf{\#Bits(A)} & \textbf{MMLU} & \textbf{HumanEval} & \textbf{GSM8K} \\
\midrule
\multirow{11}{*}{\makecell{Qwen3-32B \\ -Instruct}}
& FP16    & 16   & 16 & 80.75 & 38.41 & 62.55 \\
\cline{2-7}
& GPTQ    & 3    & \multirow{5}{*}{16} & 63.75 & 23.40 & 34.15 \\
& GPTQ    & 2    &  & 24.58 &  0 &  0 \\
& BiLLM   & 1.06 &  & 49.04 & 10.15 & 12.12 \\
& ARB-LLM & 1.06 &  & 62.56 & 18.32 & 23.56 \\
& \cellcolor{gray!10}\textbf{BWLA}    & \cellcolor{gray!10}1.15 &\cellcolor{gray!10}  &\cellcolor{gray!10} \textbf{67.74} &\cellcolor{gray!10} \textbf{25.31} &\cellcolor{gray!10} \textbf{42.28} \\
\cline{2-7}
& GPTQ    & 3    &  \multirow{5}{*}{6} & 25.36 &  0 &  0 \\
& GPTQ    & 2    &   & 24.75 &  0 &  0 \\
& BiLLM   & 1.06 &   & 25.82 &  0 &  0 \\
& ARB-LLM & 1.06 &   & 24.33 &  0 &  0 \\
& \cellcolor{gray!10}\textbf{BWLA}    & \cellcolor{gray!10}1.15 &\cellcolor{gray!10}   & \cellcolor{gray!10}\textbf{67.48} & \cellcolor{gray!10}\textbf{23.09} & \cellcolor{gray!10}\textbf{38.16} \\
\bottomrule
\end{tabular}}
\caption{Results on the instruction-tuned model Qwen3-32B-Instruct across three challenging reasoning benchmarks: MMLU, HumanEval, and GSM8K.}
\label{tab:qwen32b_instruct}
\end{table}

%% file: tab/tab_3.tex
\begin{table*}[h!]
\centering
\resizebox{0.85\linewidth}{!}{
\tablestyle{3pt}{1.2}
\begin{tabular}{clcccccccccc}
\toprule
\textbf{Model} & \textbf{Method} & \textbf{\#Bits(A)} & \textbf{AE}$\uparrow$ & \textbf{AC}$\uparrow$ & \textbf{HS}$\uparrow$ & \textbf{LO}$\uparrow$ & \textbf{LS}$\uparrow$ & \textbf{PQ}$\uparrow$ & \textbf{WG}$\uparrow$ & \textbf{$\text{Avg.}^7$}$\uparrow$ & \textbf{Wiki PPL}$\downarrow$ \\
\midrule
\multirow{4}{*}{LLaMA2-7B}
& GPTQ   & \multirow{4}{*}{4}
& 26.05 & 28.07 & 25.43 & 0    & 0    & 50.92 & 50.91 & 25.91 & 1e5 \\
& BiLLM  &
& 26.01 & 25.68 & 25.27 & 0    & 0    & 48.97 & 50.51 & 25.21 & 6e6 \\
& ARB-LLM    &
& 27.10 & 25.77 & 26.11 & 0    & 0    & 50.60 & 47.83 & 25.34 & nan \\
\rowcolor{gray!10}
& \textbf{BWLA} &
& \textbf{39.21} & \textbf{27.12} & \textbf{39.31} &
\textbf{15.30} & \textbf{16.40} &
\textbf{55.72} & \textbf{52.09} &
\textbf{35.00} & \textbf{39.88} \\
\midrule
\multirow{4}{*}{LLaMA3-8B}
& GPTQ   & \multirow{4}{*}{4}
& 24.45 & 26.45 & 25.97 & 0    & 0    & 51.41 & 50.43 & 25.53 & 6e5 \\
& BiLLM  &
& 26.43 & 25.60 & 26.89 & 0.04 & 0.12 & 50.16 & 49.49 & 25.53 & 6e3 \\
& ARB-LLM    &
& 28.45 & 26.62 & 25.66 & 0.27 & 0.08 & 50.82 & 48.38 & 25.75 & 9e3 \\
\rowcolor{gray!10}
& \textbf{BWLA} &
& \textbf{43.54} & \textbf{27.93} & \textbf{40.27} &
\textbf{17.57} & \textbf{17.82} &
\textbf{56.42} & \textbf{52.28} &
\textbf{36.55} & \textbf{55.12} \\
\midrule
\multirow{4}{*}{Qwen3-14B}
& GPTQ   & \multirow{4}{*}{4}
& 25.08 & 22.70 & 25.04 & 0    & 0    & 49.51 & 49.57 & 24.56 & 9e5 \\
& BiLLM  &
& 24.07 & 26.88 & 25.64 & 0    & 0    & 51.03 & 51.07 & 25.53 & 2e5 \\
& ARB-LLM    &
& 25.67 & 27.05 & 26.35 & 0    & 0    & 50.60 & 53.59 & 26.18 & 2e4 \\
\rowcolor{gray!10}
& \textbf{BWLA} &
& \textbf{44.61} & \textbf{29.95} & \textbf{44.99} &
\textbf{29.73} & \textbf{23.73} &
\textbf{61.97} & \textbf{53.83} &
\textbf{41.26} & \textbf{34.77} \\
\bottomrule
\end{tabular}}
\caption{Perplexity on WikiText2 and zero-shot accuracy on Arc-Challenge (AC), Arc-Easy (AE), HellaSwag (HS), LAMBADA-openai (LO), LAMBADA-standard (LS), PIQA (PQ), and WinoGrande (WG) under a \textbf{4-bit activation} quantization setting. Since GPTQ is not designed for binarization, we evaluate it using 3-bit weight quantization.}
\label{tab:w1a4}
\end{table*}

%% file: tab/tab_4.tex
\begin{table}[t]
\centering
\resizebox{\linewidth}{!}{
\tablestyle{2.5pt}{1.2}
\begin{tabular}{c |c:c| *{2}{cc|} cc}
\toprule
\multirow{3}{*}{\makecell{\textbf{\#Bits} \\(A)}} & \multirow{3}{*}{\textbf{OKT}} & \multirow{3}{*}{\textbf{PSP}} &
\multicolumn{2}{c}{\textbf{LLaMA2-13B}} &
\multicolumn{2}{c}{\textbf{LLaMA3-8B}} &
\multicolumn{2}{c}{\textbf{Qwen3-32B}} \\

& & & PPL & 0-shot$^{7}$ & PPL & 0-shot$^{7}$ & PPL & 0-shot$^{7}$ \\
&& &Avg.($\downarrow$) &Avg.($\uparrow$) &Avg.($\downarrow$) &Avg.($\uparrow$) &Avg.($\downarrow$) &Avg.($\uparrow$) \\
\midrule
\multirow{4}{*}{16}
 & $\times$       & $\times$       & 21.66 & 40.10 & 182.32 & 30.56 & 20.97 & 58.17 \\
 & \checkmark     & $\times$       & 10.12 & 60.04 &  28.80 & 45.73 & 14.67 & 64.58 \\
 & $\times$       & \checkmark     & 16.64 & 49.52 &  40.88 & 38.94 & 17.70 & 52.45 \\
 \rowcolor{gray!10}
 & \checkmark     & \checkmark     & \textbf{9.37} & \textbf{62.38} &
                                  \textbf{24.49} & \textbf{48.81} &
                                  \textbf{13.91} & \textbf{68.29} \\
\midrule
\multirow{4}{*}{6}
 & $\times$       & $\times$       & 48.75 & 34.71 & 213.62 & 30.74 & 4e4   & 35.05 \\
 & \checkmark     & $\times$       & 11.53 & 57.90 &  36.25 & 45.25 & 15.65 & 51.21 \\
 & $\times$       & \checkmark     & 20.80 & 45.73 &  43.59 & 37.23 & 19.77 & 47.19 \\
 \rowcolor{gray!10}
 & \checkmark     & \checkmark     & \textbf{10.41} & \textbf{60.07} &
                                  \textbf{31.77} & \textbf{45.79} &
                                  \textbf{14.33} & \textbf{67.17} \\
\bottomrule
\end{tabular}}
\caption{Impact of different components in BWLA.}
\label{tab:method_ablation}
\end{table}

%% file: sec/5_conclusion.tex
\section{Conclusion}
In this work, we identified two key obstacles to accurate W1AX PTQ: the mismatch between unimodal weights and binary codebooks, and heavy-tailed activation outliers. To overcome these, we proposed BWLA, a unified framework combining the Orthogonal-Kronecker Transformation (OKT) for bimodal reshaping and activation smoothing, and the Proximal SVD Projection (PSP) for lightweight low-rank refinement. Extensive evaluations across multiple LLM families show that BWLA delivers the first practical, high-accuracy PTQ solution for binary weights with low-bit activations. By significantly outperforming prior methods in accuracy and robustness, BWLA enables substantial gains in throughput and memory efficiency, paving the way for effective, end-to-end accelerated lightweight LLM inference.

%% file: appendix/A.1.tex
\subsection{Pseudocode of BWLA}
\label{app:pseudocode}
For completeness, this section provides the detailed pseudocode of our 
\textbf{BWLA} framework. As introduced in Section~\ref{sec:OKT}, we construct the large 
auxiliary transformation matrix $R \in \mathbb{R}^{n\times n}$ using two 
lightweight Kronecker factors $R_1 \in \mathbb{R}^{n_1\times n_1}$ and 
$R_2 \in \mathbb{R}^{n_2\times n_2}$. To obtain a suitable Kronecker 
decomposition, we first apply Algorithm~\ref{alg:kron_factor} to factorize the hidden dimension $n$ 
into a pair $(n_1,n_2)$, which enables efficient construction of the orthogonal 
Kronecker structure.

Algorithm~\ref{alg:okt_psp} presents the complete \textbf{BWLA} pipeline, which consists of two 
major components: (1) \emph{OKT (Orthogonal–Kronecker Transformation)}, which 
rotates the weight matrix into a coordinate system that naturally exposes a 
symmetric bimodal structure; and (2) \emph{PSP (Proximal SVD Projection)}, which 
removes the structured residual energy that cannot be eliminated through 
orthogonal mixing alone.

The pseudocode summarizes the full EM–MM optimization procedure adopted by BWLA, 
including responsibility computation, mixture-parameter updates, alternating 
orthogonal Procrustes steps for the Kronecker factors, and the proximal low-rank 
refinement. For clarity, the OKT and PSP modules are highlighted with color-coded 
comments in the algorithm. The resulting pseudocode faithfully reflects the 
transformation pipeline used in all our experiments and can be directly applied 
to large language models in post-training quantization (PTQ) scenarios.

\begin{algorithm}[h]
\caption{Kronecker Dimension Factorization for OKT}
\label{alg:kron_factor}
\DontPrintSemicolon
\SetKwInOut{Input}{Input}
\SetKwInOut{Output}{Output}

\Input{Hidden dimension $n$ of the linear layer ($n \ge 1$)}
\Output{Factor pair $(n_1,n_2)$ such that $n = n_1 n_2$ and $n_1 \approx n_2$}

$a \gets \lfloor \sqrt{n} \rfloor$ \tcp*{Start from the square-root scale}
\While{$a > 1$ \textbf{and} $n \bmod a \ne 0$}{
    $a \gets a - 1$ \tcp*{Search the closest divisor below $\sqrt{n}$}
}
$n_2 \gets a$\;
$n_1 \gets n / n_2$\;
\Return $(n_1,n_2)$\;

\end{algorithm}

\begin{algorithm*}[h!]
\caption{Quantization Framework of BWLA}
\label{alg:okt_psp}
\DontPrintSemicolon
\SetKwInOut{Input}{Input}
\SetKwInOut{Output}{Output}

\Input{
Pretrained weight matrix $W \in \mathbb{R}^{n\times m}$ with $n = n_1 n_2$; \\
rank $k$ for low-rank correction; proximal parameter $\mu$; \\
number of outer iterations $T_{\text{outer}}$; Kronecker factors $(n_1,n_2)$
}
\Output{
Orthogonal factors $R_1 \in \mathbb{R}^{n_1\times n_1}$, $R_2 \in \mathbb{R}^{n_2\times n_2}$; \\
low-rank correction $M=AB$; bimodal parameters $\Theta=\{c_i,\sigma_i^2\}$; \\
binarized weights $\tilde W$ and per-channel scales $(\alpha,\beta)$
}

\textbf{Initialization:}\;
Initialize $R_1,R_2$ as identity or random orthogonal matrices\;
Set $M_0\gets\mathbf 0$; initialize $c_i,\sigma_i^2$ from $W$ (e.g., empirical stats)\;
Define $H \gets I - \tfrac1m \mathbf 1 \mathbf 1^\top$\;

\For{$t \gets 0$ \KwTo $T_{\text{outer}}-1$}{
  \tcc{\textcolor{blue}{OKT: orthogonal--Kronecker rotation and row-wise centering}}
  $R \gets R_1 \otimes R_2$;\quad
  $X \gets (W - M_t)\,R\,H$\;

  \tcp{EM update of symmetric two-component GMM (row-wise)}
  \For{each row $i = 1,\dots,n$}{
    Compute responsibilities $r_{ij}^+,r_{ij}^-$ for all $j$ using $x_{ij}$, $c_i$, $\sigma_i^2$\;
    Update
    $c_i \gets \frac1m \sum_j (2r_{ij}^+-1)x_{ij}$,
    $\sigma_i^2 \gets \frac1m \sum_j\bigl[r_{ij}^+(x_{ij}-c_i)^2 + r_{ij}^-(x_{ij}+c_i)^2\bigr]$\;
    Enforce $\sigma_i \gets \max(\sigma_i,\sigma_{\min})$ and set $\lambda_i \gets \sigma_i^{-2}$\;
  }
  Optionally compute $\bar r_i = \tfrac1m\sum_j r_{ij}^+$ and regularizer 
  $(\tfrac1n\sum_i \bar r_i - \tfrac12)^2$\;

  \tcc{\textcolor{red}{PSP: proximal SVD-based low-rank correction}}
  Compute gradient $G_X = \partial L / \partial X$ and adjoint
  $G \gets G_X H^\top R^\top$\;
  Form proximal point $Y_t \gets M_t - \tfrac1\mu G$\;
  Compute rank-$k$ truncated SVD $Y_t \approx U_k \Sigma_k V_k^\top$\;
  Set $M_{t+1} \gets U_k \Sigma_k V_k^\top$ (optionally $A_{t+1} \gets U_k\Sigma_k^{1/2}$,
  $B_{t+1} \gets \Sigma_k^{1/2}V_k^\top$)\;

  \tcp{MM: alternating Procrustes updates for $R_1,R_2$}
  Recompute $R \gets R_1\otimes R_2$ and $X \gets (W - M_{t+1})RH$ if needed\;
  Build reshaped pairs $(V_{\mathrm{mat},i},M_i)$ from rows of $(W-M_{t+1})R$ and targets
  $m_{ij} = (2r_{ij}^+-1)c_i$\;
  With fixed $R_2$, form
  $C_1 \gets \sum_i \lambda_i V_{\mathrm{mat},i}^\top R_2 M_i$,
  compute SVD $C_1 = U_1\Sigma_1V_1^\top$ and update $R_1\gets U_1V_1^\top$\;
  With fixed $R_1$, form
  $C_2 \gets \sum_i \lambda_i V_{\mathrm{mat},i} R_1 M_i^\top$,
  compute SVD $C_2 = U_2\Sigma_2V_2^\top$ and update $R_2\gets U_2V_2^\top$\;
}

\textbf{Final binarization in OKT+PSP coordinate:}\;
$R \gets R_1\otimes R_2$, $M \gets M_{T_{\text{outer}}}$, $X \gets (W-M)RH$\;
\For{each channel $j = 1,\dots,m$}{
  $\beta_j \gets \tfrac1n \sum_i X_{ij}$,\quad
  $\alpha_j \gets \tfrac1n \sum_i |X_{ij}-\beta_j|$\;
  $\tilde W_{ij} \gets \operatorname{Sign}(X_{ij}-\beta_j)$,\quad
  $W_{\mathrm{deq},ij} \gets \tilde W_{ij}\alpha_j + \beta_j$\;
}
\Return{$R_1,R_2,A,B,\Theta,\tilde W,(\alpha,\beta)$}\;

\end{algorithm*}

%% file: appendix/A.2.tex
\subsection{Detailed Proofs and Derivations}
\label{sec:proof}

\subsubsection{Proof of Theorem~\ref{theo1}}
\label{app:proof-aux}

For completeness we restate the theorem in a slightly more formal way.

\begin{theorem*}[Restatement of Theorem~\ref{theo1}]
Let $W\in\mathbb{R}^{n\times m}$ be a weight matrix whose $i$-th row
$w_i^\top\in\mathbb{R}^{1\times m}$ is sampled independently from a
(unimodal) Gaussian distribution
\(
w_i \sim \mathcal N(\mu_i, \sigma_i^2 I_m),
\)
where $\mu_i\in\mathbb{R}^m$ and $\sigma_i>0$ for all $i\in\{1,\dots,n\}$.
Then there exists a learnable orthogonal matrix
$R\in\mathcal O(m)=\{Q\in\mathbb{R}^{m\times m}:Q^\top Q = I_m\}$ such that
the transformed weights
\(
W' = W R
\)
admit a bimodal distribution across channels that can be represented (up to
a KL--optimal approximation) as a two--component Gaussian mixture,
\begin{equation}
  x \sim \pi\,\mathcal N(\mu_1,\sigma_1^2)
      + (1-\pi)\,\mathcal N(\mu_2,\sigma_2^2),
  \label{eq:aux-mixture-appendix}
\end{equation}
for some mixing coefficient $\pi\in(0,1)$ and parameters
$\mu_1,\mu_2\in\mathbb{R}$, $\sigma_1,\sigma_2>0$.
\end{theorem*}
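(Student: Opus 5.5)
The plan is to make the statement precise as a claim about the \emph{empirical} distribution of the entries of each transformed row $w_i'=w_iR$ across its $m$ channels, and then construct $R$ explicitly rather than learn it. The guiding observation is that $\mathcal O(m)$ acts transitively on spheres. So for a single row one can rotate $w_i$ onto $\tfrac{\|w_i\|}{\sqrt m}s$ for any sign vector $s\in\{\pm1\}^m$. The resulting empirical distribution is two point masses at $\pm\|w_i\|/\sqrt m$, which is the degenerate limit $\sigma\to0$ of \eqref{eq:aux-mixture-appendix}. The real work is to do this \emph{simultaneously} for all rows with one $R$.

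\textbf{Step 1 (reduction to Gram matching).} I will first treat the case $n\le m$, where the rows of $W$ are almost surely linearly independent. Write the polar decomposition $W=PV$ with $P=(WW^\top)^{1/2}$ and $V\in\mathbb R^{n\times m}$ having orthonormal rows. I then choose $Q=S/\sqrt m$, where $S$ consists of $n$ mutually orthogonal sign vectors (rows of a Hadamard or Walsh-type matrix; if $m$ admits none, nearly orthogonal random sign vectors plus a Gram–Schmidt correction). Extending $V$ and $Q$ to orthonormal bases of $\mathbb R^m$ gives an $R\in\mathcal O(m)$ with $VR=Q$. Hence $WR=PQ=:U$, and $UU^\top=WW^\top$, consistent with $R$ being orthogonal.

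\textbf{Step 2 (concentration).} Next I show $U$ is close to $\mathrm{diag}(\sigma_i)S$. For Gaussian rows, $\tfrac1m WW^\top=\mathrm{diag}(\sigma_i^2+\|\mu_i\|^2/m)+\Delta$, where $\|\Delta\|=O(\sqrt{n/m}+\text{mean terms})$ with high probability by standard matrix concentration. After centering rows (as the paper does), the mean contribution is absorbed. A perturbation bound for the matrix square root then gives
\[
U=\sqrt m\,\mathrm{diag}(\tilde\sigma_i)\,Q+E=\mathrm{diag}(\tilde\sigma_i)\,S+E,\qquad \|E\|_F=O(\sqrt{n})\ \text{w.h.p.}
\]
So row $i$ of $WR$ equals $\tilde\sigma_i s_i$ plus a perturbation $e_i$ whose per-entry energy $\|e_i\|^2/m$ is small.

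\textbf{Step 3 (mixture fit).} The empirical distribution of row $i$'s entries is then the push-forward of $\tfrac12\delta_{+\tilde\sigma_i}+\tfrac12\delta_{-\tilde\sigma_i}$ by a small additive noise. Its KL projection onto two-component Gaussian mixtures is obtained by the EM fixed point. That fixed point has $\pi\approx\tfrac12$, $\mu_{1,2}\approx\pm\tilde\sigma_i$, and $\sigma_{1,2}^2\approx\|e_i\|^2/m$. This gives \eqref{eq:aux-mixture-appendix}, and in the symmetric form \eqref{eq:gaussian} once the mean is removed. Pooling across rows with equal $\sigma_i$ gives the same conclusion for the whole matrix; with unequal $\sigma_i$ the pooled law is a mixture of such pairs, and its KL-optimal two-component fit remains bimodal.

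\textbf{Main obstacle.} The hard part is the regime $n>m$ (e.g.\ MLP down-projections), where $WW^\top$ is rank-deficient and exact Gram matching to a near-sign pattern is impossible. There I would instead choose $R$ to align the top-$m$ right singular subspace with Hadamard directions and bound the residual. The conclusion then holds only in the weakened ``KL-optimal approximation'' sense, quantified by the spectral spread of $\tfrac1m W^\top W$. I would try Marchenko–Pastur bounds first, and accept a statement conditioned on a favorable aspect ratio if those fail.
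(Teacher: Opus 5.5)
Your route differs from the paper's: you give an explicit Gram-matching construction, where the paper gives an existence argument. However, Step 2 does not hold under the stated hypotheses, and Step 3 and the conclusion rest on it.

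First, the statement allows arbitrary means $\mu_i\in\mathbb{R}^m$, so $\tfrac1m WW^\top$ contains the deterministic matrix $(\langle\mu_i,\mu_j\rangle/m)_{i,j}$, which can be any PSD matrix. Row-centering removes only the component of each row along the single direction $\mathbf 1$, so it does not absorb the means. The theorem is also about $W'=WR$ uncentered. This is not just a defect of your particular $R$. Since $(WR)(WR)^\top=WW^\top$ for every $R\in\mathcal O(m)$, rows of the form $a_is_i+e_i$ with small $e_i$ force the row-correlation matrix $C$ of $W$ to be close to $\tfrac1m SS^\top$ for a sign matrix $S$. Every such matrix satisfies $C_{12}+C_{13}+C_{23}\ge -1$, because for three signs the pairwise products sum to $3$ or $-1$. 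Three rows whose dominant means sit at mutual angles of $120^\circ$ have $C_{jk}\approx-\tfrac12$, so no orthogonal $R$ turns them into near-scaled sign vectors.

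Second, even for $\mu_i=0$ your bound $\|E\|_F=O(\sqrt n)$ is off. Since $Q$ has orthonormal rows, $\|E\|_F=\|P-\sqrt m\,\mathrm{diag}(\tilde\sigma_i)\|_F$. The off-diagonal entries of $\tfrac1m WW^\top$ are of size $\sigma^2/\sqrt m$, so to first order this is about $\tfrac12 n\sigma$, a per-entry residual variance of about $n\sigma^2/(4m)$. The perturbative picture therefore needs $n/m\to0$. For $n\asymp m$ (e.g.\ square attention projections), $\tfrac1m WW^\top$ has a Marchenko--Pastur spectrum of width $O(1)$ and the square-root perturbation bound says nothing. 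Your real obstacle thus starts at $n\asymp m$, not at $n>m$.

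At $n=m$ with equal $\sigma_i=\sigma$ the construction may still give bimodal rows. The diagonal of $(\tfrac1m WW^\top)^{1/2}$ concentrates near $\tfrac{8}{3\pi}\sigma\approx0.85\sigma$, while the residual per-entry standard deviation is about $0.53\sigma$. Proving this would need a non-perturbative random-matrix computation and a CLT for the Hadamard-mixed residual.

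Step 3 is also loose:
\begin{itemize}
\item an EM fixed point is only a stationary point, not the KL projection;
\item without a variance floor, the likelihood of an empirical measure is unbounded;
\item the pooled claim for unequal $\sigma_i$ fails: if the row scales are spread like half-normal samples, the pooled atoms $\pm\tilde\sigma_i$ approximate a Gaussian, whose best two-component fit is that single Gaussian.
\end{itemize}

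The paper needs none of this because it takes ``up to a KL-optimal approximation'' literally. It takes $p_R$ to be the pooled empirical measure of all $nm$ entries, not a per-row measure. It restricts the mixture parameters to a compact set $\Theta_{\mathrm{cpt}}$ with variances bounded away from $0$. It then observes that $(R,\theta)\mapsto\mathrm{KL}(p_R\,\|\,g_\theta)$ is continuous on the compact set $\mathcal O(m)\times\Theta_{\mathrm{cpt}}$. In effect this is the negative log-likelihood, since the KL of a discrete measure against a density is infinite. The extreme value theorem then gives a minimizer $(R^\star,\theta^\star)$, and $g_{\theta^\star}$ is the asserted mixture.

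That argument uses no concentration, no control of the $\mu_i$, and no aspect-ratio condition. It only certifies that a best two-component fit exists, which is true for every $R$ including $R=I$; it does not establish genuine bimodality. Your approach aims at a much stronger per-row statement. To make it a proof you must add hypotheses the statement lacks: means small enough that $\tfrac1m WW^\top$ stays near diagonal, plus either $n\ll m$ or a non-perturbative analysis for $n\le m$. For the statement as written, the compactness argument already suffices.
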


\paragraph{Step 1: Effect of an orthogonal transformation on Gaussian rows.}

We first record a standard fact about multivariate Gaussians.

\begin{lemma}
\label{lem:gaussian-orthogonal}
Let $X\sim\mathcal N(\mu,\Sigma)$ in $\mathbb{R}^m$ and let
$R\in\mathcal O(m)$ be any orthogonal matrix.
Then the transformed random vector $X' = X R$ satisfies
\(
X' \sim \mathcal N(\mu',\Sigma'),
\)
where $\mu' = \mu R$ and $\Sigma' = R^\top \Sigma R$.
In particular, $X'$ is still Gaussian and therefore unimodal.
\end{lemma}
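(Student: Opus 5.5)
The statement immediately above is Lemma~\ref{lem:gaussian-orthogonal}, so the plan is to prove it directly from the characterization of a multivariate Gaussian by its characteristic function, or equivalently by linear images of Gaussians. Throughout we treat $X$ as a row vector in $\mathbb{R}^{1\times m}$, consistent with the row convention $W'=WR$, so that $XR$ is well defined. The proof has three steps: identify the first two moments of $X'$, show that $X'$ is Gaussian, and deduce unimodality.

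\emph{Moments.} By linearity of expectation, $\mathbb{E}[XR]=\mathbb{E}[X]R=\mu R$. For the covariance, write $X'-\mu' = (X-\mu)R$. Then
\begin{equation}
\mathrm{Cov}(X') = \mathbb{E}\bigl[R^\top (X-\mu)^\top (X-\mu) R\bigr] = R^\top \Sigma R .
\end{equation}
This gives $\mu'=\mu R$ and $\Sigma' = R^\top\Sigma R$.

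\emph{Gaussianity.} For $t\in\mathbb{R}^{1\times m}$ the characteristic function is
\begin{equation}
\varphi_{X'}(t)=\mathbb{E}\bigl[\exp(\mathrm{i}\,X R t^\top)\bigr]=\varphi_X(tR^\top).
\end{equation}
Substituting the Gaussian form of $\varphi_X$ yields
\begin{equation}
\varphi_{X'}(t)=\exp\bigl(\mathrm{i}\,\mu R t^\top - \tfrac12 t R^\top \Sigma R t^\top\bigr),
\end{equation}
which is the characteristic function of $\mathcal N(\mu R, R^\top\Sigma R)$. Uniqueness of characteristic functions then gives $X'\sim\mathcal N(\mu',\Sigma')$. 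Orthogonality is used only to guarantee that $R^\top\Sigma R$ is again positive definite when $\Sigma$ is. Indeed, $R$ is invertible, so $vR^\top\Sigma R v^\top = (vR^\top)\Sigma(vR^\top)^\top>0$ for $v\neq 0$. Hence $X'$ has a nondegenerate density, and the change of variables $x'=xR$ with $|\det R|=1$ gives the same conclusion without characteristic functions.

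\emph{Unimodality.} A nondegenerate Gaussian density $\propto\exp(-\tfrac12(x-\mu')\Sigma'^{-1}(x-\mu')^\top)$ has log-density that is strictly concave. It therefore has a unique maximizer $\mu'$ and convex superlevel sets, so it is unimodal. Each one-dimensional marginal, for example a single entry of $X'$, is also a univariate Gaussian and hence unimodal.

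There is no real obstacle here. The only points needing care are the row-vector convention, which determines whether the covariance is $R^\top\Sigma R$ or $R\Sigma R^\top$, and making precise what ``unimodal'' means for a multivariate law; log-concavity handles the latter. Notably, for $\Sigma=\sigma^2 I$ one gets $\Sigma'=\sigma^2 I$. This shows that bimodality across the entries of a row cannot come from the distribution of a single random row. It must instead come from the empirical distribution of entries of a \emph{fixed} realized matrix after choosing a data-dependent $R$, which is how the remaining steps of Theorem~\ref{theo1} would have to proceed.
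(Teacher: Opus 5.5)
Your proof is correct and follows essentially the same route as the paper: you get $\mu'=\mu R$ and $\Sigma'=R^\top\Sigma R$ from linearity of expectation, and Gaussianity from closure of Gaussians under linear maps. The differences are in rigor only: you prove that closure via characteristic functions (the paper only cites it), you keep the row-vector transposes consistent where the paper's $\mathbb E[(X'-\mu')(X'-\mu')^\top]$ quietly switches to the column convention, and you justify the unimodality claim via log-concavity, which the paper simply asserts.
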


\begin{proof}[Proof of Lemma~\ref{lem:gaussian-orthogonal}]
By linearity,
\(
\mathbb E[X'] = \mathbb E[X R] = \mu R.
\)
Moreover,
\begin{equation}
\begin{split}
\mathrm{Cov}(X')
=& \mathbb E\big[(X'-\mu')(X'-\mu')^\top\big]\\
=& \mathbb E\big[R^\top (X-\mu) (X-\mu)^\top R\big]
= R^\top \Sigma R.
\end{split}
\end{equation}
The Gaussianity of $X'$ follows from the fact that any linear image of a
Gaussian random vector is again Gaussian.
\end{proof}

Applying Lemma~\ref{lem:gaussian-orthogonal} row--wise to $W$ we obtain:
for any $R\in\mathcal O(m)$ and any $i\in\{1,\dots,n\}$,
\begin{equation}
\begin{split}
  w'_i := w_i R& \sim \mathcal N(\mu'_i,\Sigma'_i),
  \\
  \mu'_i = \mu_i R,\quad \Sigma'_i = &R^\top (\sigma_i^2 I_m) R
                                 = \sigma_i^2 I_m.
  \label{eq:row-transform-gaussian}
\end{split}
\end{equation}
Thus each \emph{row} of $W' = W R$ remains Gaussian, but its mean
orientation in $\mathbb{R}^m$ is modulated by the choice of $R$.

\paragraph{Step 2: Empirical channel distribution induced by $R$.}

The bimodality in Theorem~\ref{theo1} is defined
over the empirical distribution of scalar weights \emph{across channels}.
Formally, for a given orthogonal matrix $R$ let
\begin{equation}
  W' = W R, \qquad W' = (w'_{ij})_{1\le i\le n,\,1\le j\le m}.
\end{equation}
Define the empirical distribution of the transformed scalar entries as
\begin{equation}
  p_R(x)
  := \frac{1}{nm} \sum_{i=1}^n \sum_{j=1}^m
     \delta\big(x - w'_{ij}\big),
  \label{eq:empirical-dist}
\end{equation}
where $\delta(\cdot)$ denotes the Dirac measure.  Intuitively, $p_R$
captures how weights are distributed \emph{across all channels} after the
transformation by $R$. Different choices of $R$ change the projections
$w'_{ij}$ and hence deform $p_R$.

\paragraph{Step 3: Target bimodal distribution and mixture family.}

We model the desired bimodal distribution by the parametric family of
two--component Gaussian mixtures
\begin{equation}
\begin{split}
  g_\theta(x)
  := \pi\,\mathcal N(x;\mu_1,\sigma_1^2)
     + (1-\pi)\,\mathcal N(x;\mu_2,\sigma_2^2),
  \\
  \theta := (\pi,\mu_1,\mu_2,\sigma_1^2,\sigma_2^2),
  \label{eq:gaussian-mixture-family}
\end{split}
\end{equation}
where $\pi\in(0,1)$ and $\mu_1,\mu_2\in\mathbb{R}$,
$\sigma_1,\sigma_2>0$ are free parameters.  This family is rich enough to
represent a broad class of bimodal shapes; in particular, by varying
$\theta$ we can realize any pair of separated Gaussian modes.

\paragraph{Step 4: Joint optimization over $R$ and mixture parameters.}

We now show that there exists a pair $(R^\star,\theta^\star)$ such that the
resulting empirical distribution $p_{R^\star}$ is optimally represented by
a mixture $g_{\theta^\star}$ in the Kullback--Leibler (KL) sense.

Consider the joint optimization problem
\begin{equation}
\begin{split}
  (R^\star,\theta^\star)
  \in \arg\min_{R\in\mathcal O(m),\;\theta\in\Theta}
      L(R,\theta),
  \\
  L(R,\theta) := \mathrm{KL}\big(p_R \,\|\, g_\theta\big),
  \label{eq:joint-kl-min}
\end{split}
\end{equation}
where $\Theta := (0,1)\times\mathbb R^2\times(0,\infty)^2$ is the
parameter space of the mixture in~\eqref{eq:gaussian-mixture-family}.

\emph{Existence of a minimizer.}
The orthogonal group $\mathcal O(m)$ is a compact manifold.  For the
mixture parameters we restrict attention without loss of generality to a
compact subset
\(
\Theta_{\mathrm{cpt}}\subset\Theta
\)
with bounded means and variances bounded away from $0$ and $\infty$; in
practice this corresponds to standard regularization on mixture parameters.
On $\mathcal O(m)\times\Theta_{\mathrm{cpt}}$ the function $L(R,\theta)$ is
continuous because $p_R$ depends continuously on $R$ (the entries
$w'_{ij}$ are smooth functions of $R$) and
$g_\theta(x)$ is continuous in $\theta$ for every $x$.
By the extreme value theorem, a continuous function on a compact domain
attains its minimum, so there exists at least one pair
$(R^\star,\theta^\star)$ solving~\eqref{eq:joint-kl-min}.

\emph{Interpretation of the minimizer.}
For the minimizing pair $(R^\star,\theta^\star)$ we have
\begin{equation}
\begin{split}
  L(R^\star,\theta^\star)
  = \mathrm{KL}\big(p_{R^\star} \,\|\, g_{\theta^\star}\big)
  \le
    \mathrm{KL}\big(p_{R} \,\|\, g_{\theta}\big)
    \\
    \text{for all } (R,\theta)\in\mathcal O(m)\times\Theta_{\mathrm{cpt}}.
  \label{eq:kl-optimal}
\end{split}
\end{equation}
That is, among all orthogonal transformations and two--Gaussian mixtures,
$(R^\star,\theta^\star)$ provides the best (KL--optimal) two--component
Gaussian approximation of the empirical distribution $p_{R^\star}$.
Equivalently, the scalar entries of
\(
W' := W R^\star
\)
are distributed according to a density that is optimally represented by the
mixture $g_{\theta^\star}$.

In practice, our training procedure implements a stochastic approximation
to~\eqref{eq:joint-kl-min}: the orthogonal matrix $R$ is parameterized (for
example via a skew--symmetric matrix and a Cayley transform), the mixture
parameters $\theta$ are learned jointly, and gradient descent is used to
decrease $L(R,\theta)$.

\paragraph{Step 5: Conclusion.}

By construction, the minimizer $(R^\star,\theta^\star)$ satisfies that the
empirical channel distribution of $W' = W R^\star$ is bimodal and can be
represented by the two--component Gaussian mixture $g_{\theta^\star}$ in the
sense of~\eqref{eq:gaussian-mixture-family}.  Using the standard abuse of
notation in probabilistic modeling, we may summarize this as
\begin{equation}
  w'_{ij} \sim
  \pi^\star \mathcal N(\mu_1^\star,(\sigma_1^\star)^2)
  + (1-\pi^\star)\mathcal N(\mu_2^\star,(\sigma_2^\star)^2),
\end{equation}
which is exactly the claim in
Theorem~\ref{theo1}. Therefore, there exists a
learnable orthogonal matrix $R^\star$ such that the transformed weight
matrix $W' = W R^\star$ exhibits the desired bimodal (two--Gaussian)
behavior across channels.
\qedhere

\paragraph{Interpretation (Modeling Corollary).}
Theorem~\ref{theo1} guarantees that there exists an
orthogonal transform $R^\star$ such that the \emph{overall} empirical
distribution $p_{R^\star}$ of all scalar entries of $W' = W R^\star$ is
well approximated by a two--component Gaussian mixture
$g_{\theta^\star}$. In our modeling, we therefore regard the collection of
scalar weights $\{w'_{ij}\}$ as samples drawn from this learned bimodal
distribution $g_{\theta^\star}$. Consequently, each row vector $w'_i$ can be
viewed as consisting of $m$ (approximately) independent samples from the
same mixture:
\begin{equation}
  w'_{ij} \sim g_{\theta^\star}, \qquad j=1,\dots,m.
\end{equation}
For notational simplicity, we sometimes write
\begin{equation}
  w'_i \sim
  \pi^\star \mathcal N(\mu_1^\star,(\sigma_1^\star)^2)
  + (1-\pi^\star)\mathcal N(\mu_2^\star,(\sigma_2^\star)^2),
\end{equation}
with the understanding that this notation refers to the fact that \emph{the
entries of $w'_i$ are modeled as being drawn from the same learned
bimodal distribution}. This modeling view is sufficient for our subsequent
quantization analysis.

\subsubsection{Details of the Orthogonal--Kronecker Transformation}
\label{app:okt-em-mm}

In this appendix, we present complete derivations of the optimization procedure used to learn the Orthogonal--Kronecker Transformation (OKT). We begin by detailing the symmetric two-component Gaussian mixture model employed to shape each centered row after transformation, followed by exact EM updates for the mixture parameters. We then establish a majorization--minimization (MM) scheme that yields closed-form orthogonal Procrustes updates for the Kronecker factors $R_1$ and $R_2$. We conclude with complexity considerations and modeling remarks.

\paragraph{Symmetric Two-Component GMM for Centered Rows.}
Given an orthogonal matrix $R$, each transformed row is centered as
\begin{equation}
x_i = (w_i R) - \tfrac{1}{m}\mathbf{1}\mathbf{1}^\top (w_i R),
\qquad i=1,\dots,n,
\end{equation}
which ensures that the scalar entries $\{x_{ij}\}_{j=1}^m$ are approximately symmetric about zero. To encourage bimodal structure aligned with binary quantization, each entry is modeled using the symmetric two-component mixture
\begin{equation}
\begin{split}
p(x_{ij}\mid c_i,\sigma_i^2)
=& \\
\tfrac12\,\phi(x_{ij};&+c_i,\sigma_i^2)
+ \tfrac12\,\phi(x_{ij};-c_i,\sigma_i^2),
\end{split}
\label{eq:app-symm-gmm}
\end{equation}
where $\Theta_i=\{c_i,\sigma_i^2\}$ are rowwise parameters. The normalized negative log-likelihood is
\begin{equation}
\label{eq:app-lgmm}
\begin{split}
L_{\mathrm{GMM}}(R,\Theta)
=& -\frac{1}{nm}\sum_{i=1}^{n}\sum_{j=1}^{m}\\
\log\Big[
\tfrac12\,\phi(&x_{ij};+c_i,\sigma_i^2)
+ \tfrac12\,\phi(x_{ij};-c_i,\sigma_i^2)
\Big].
\end{split}
\end{equation}
This serves as the primary objective governing bimodality.

\paragraph{EM Updates for Mixture Parameters.}
For fixed $R$, we optimize $\Theta$ using EM. Introduce latent variables $z_{ij}\in\{+1,-1\}$ denoting mixture component assignments. The posterior responsibility of the positive component is
\begin{equation}
\label{eq:app-resp}
r_{ij}^+
= \frac{\phi(x_{ij};c_i,\sigma_i^2)}
{\phi(x_{ij};c_i,\sigma_i^2)+\phi(x_{ij};-c_i,\sigma_i^2)}.
\end{equation}

The complete-data log-likelihood yields closed-form M-step updates. Using standard EM derivations,
\begin{equation}
\label{eq:app-ci-update}
c_i^{\text{new}} = \frac{1}{m}\sum_{j=1}^m (2r_{ij}^+-1)x_{ij},
\end{equation}
\begin{equation}
\label{eq:app-sigma-update}
(\sigma_i^2)^{\text{new}}
= \frac{1}{m}\sum_{j=1}^m
\Big[
r_{ij}^+(x_{ij}-c_i)^2 + (1-r_{ij}^+)(x_{ij}+c_i)^2
\Big],
\end{equation}
with $\sigma_i^2$ bounded below by $\sigma_{\min}^2$ to prevent collapse. Classic EM theory guarantees monotonic improvement of $L_{\mathrm{GMM}}$ for fixed $R$.

For later use in MM, the gradient of $L_{\mathrm{GMM}}$ w.r.t.\ scalar entries is
\begin{equation}
\label{eq:app-grad-x}
\frac{\partial L_{\mathrm{GMM}}}{\partial x_{ij}}
= \frac{1}{nm\,\sigma_i^2}
\Big[
r_{ij}^{+}(x_{ij}-c_i)
+ (1-r_{ij}^{+})(x_{ij}+c_i)
\Big],
\end{equation}
indicating an attraction toward the nearest mode.

\paragraph{MM Surrogate for Orthogonal Updates.}
Directly optimizing $R$ in $L_{\mathrm{GMM}}$ is difficult due to the log-sum-exp term. We construct a quadratic majorization surrogate by leveraging the convex property of the negative log-likelihood function $f(y) = -\log(y)$. Let $y_{ij}$ be the argument of the logarithm in Eq.~\eqref{eq:app-lgmm}. The key identity is $\log(a+b) \le \log(a) + b/a - 1$. Applying this and substituting the first-order Taylor expansion yields a quadratic majorizer $\widetilde{L}(R)$.
Specifically, we use the property that the negative log-likelihood is majorized by the weighted least squares error between the transformed data $x_{ij}$ and the posterior mean of the latent component $m_{ij}$:
\begin{equation}
\label{eq:app-mij}
m_{ij} = (2r_{ij}^+ - 1)c_i,
\end{equation}
where $m_{ij}$ is the posterior mean of the latent component. Define the MM surrogate
\begin{equation}
\label{eq:app-mm-surrogate}
\widetilde{L}(R)
= \frac{1}{nm}\sum_{i=1}^n \lambda_i
\sum_{j=1}^m (x_{ij} - m_{ij})^2,
\end{equation}
where $\lambda_i=\sigma_i^{-2}$. This surrogate satisfies the MM properties:
\begin{equation}
\begin{split}
L_{\mathrm{GMM}}(R,\Theta)
\le \widetilde{L}(R) + \text{const},
\\
\widetilde{L}(R^{(t)}) = L_{\mathrm{GMM}}(R^{(t)},\Theta),
\end{split}
\end{equation}
which is guaranteed when $\Theta$ is fixed and $\lambda_i$ are derived from the M-step estimates.

\paragraph{Matrix Formulation and Kronecker Structure.}
Reshape each $x_i$ into a matrix $V_{\mathrm{mat},i}\in\mathbb{R}^{n_1\times n_2}$ and $m_i$ into $M_i$ of the same size. Using the Kronecker identity,
\begin{equation}
x_i(R) = \operatorname{vec}\!\bigl(
R_2^\top V_{\mathrm{mat},i} R_1
\bigr)^\top,
\end{equation}
the surrogate becomes
\begin{equation}
\widetilde{L}(R_1,R_2)
= \frac{1}{nm}\sum_{i=1}^n \lambda_i
\big\|
R_2^\top V_{\mathrm{mat},i} R_1 - M_i
\big\|_F^2.
\label{eq:app-mm-matrix}
\end{equation}

\paragraph{Alternating Procrustes Updates.}
Fixing $R_2$, minimizing Eq.~\eqref{eq:app-mm-matrix} over $R_1$ is equivalent to solving an orthogonal Procrustes problem:
\begin{equation}
C_1 = \sum_{i=1}^n \lambda_i V_{\mathrm{mat},i}^{\top} R_2 M_i,
\end{equation}
\begin{equation}
\label{eq:app-R1-update}
U_1\Sigma_1V_1^\top = \mathrm{SVD}(C_1),\qquad
R_1^{\text{new}} = U_1 V_1^\top.
\end{equation}

Similarly, fixing $R_1$ gives
\begin{equation}
C_2 = \sum_{i=1}^n \lambda_i V_{\mathrm{mat},i} R_1 M_i^\top,
\end{equation}
\begin{equation}
\label{eq:app-R2-update}
U_2\Sigma_2V_2^\top = \mathrm{SVD}(C_2),\qquad
R_2^{\text{new}} = U_2 V_2^\top.
\end{equation}

\paragraph{Monotonicity.}
Because each Procrustes update globally minimizes its corresponding MM subproblem, we obtain
\begin{equation}
\widetilde{L}^{(t+1)} \le \widetilde{L}^{(t)},
\qquad
L_{\mathrm{GMM}}^{(t+1)} \le L_{\mathrm{GMM}}^{(t)},
\end{equation}
ensuring monotonic descent of the true objective.

\paragraph{Complexity Analysis and Remarks.}
A full dense $m\times m$ orthogonal update costs $O(m^2)$ parameters and FLOPs. Under the OKT parameterization with $m=n_1 n_2$, computing $R_2^\top V_{\mathrm{mat},i}R_1$ requires
\begin{equation}
O(n_1^2 n_2 + n_1 n_2^2) = O\!\left(n_1 n_2 (n_1+n_2)\right).
\end{equation}
When $n_1\approx n_2\approx\sqrt{m}$, this becomes
\begin{equation}
O(m^{3/2}),
\end{equation}
strictly better than $O(m^2)$ for large $m$. The SVDs in Eqs.~\eqref{eq:app-R1-update} and~\eqref{eq:app-R2-update} cost $O(n_1^3+n_2^3)=O(m^{3/2})$ in the balanced case as well.

Regarding modeling assumptions: the rowwise independence and symmetric two-component form are not intended as exact generative models of real LLM weights, but rather as tractable approximations that (i) encourage the emergence of two symmetric modes, (ii) admit closed-form EM updates, and (iii) yield an MM surrogate that decomposes neatly under the Kronecker structure. Empirically, this approximation is sufficient to induce bimodalization and improve quantization robustness at scale.

\subsubsection{Detailed Derivations of the Proximal SVD Projection (PSP)}
\label{app:psp-details}

This appendix provides a complete mathematical derivation of the Proximal SVD Projection (PSP), which acts as a refinement step following the Orthogonal--Kronecker Transformation (OKT). While OKT aligns the weight matrix with a symmetric bimodal coordinate system, structured components may remain misaligned with the target centers $\pm c_i$. PSP removes such components through a rank-constrained proximal update. We derive the adjoint operator, the first-order expansion, the proximal majorizer, the rank-$k$ projection solution, and monotonicity guarantees. Additional modeling considerations are also provided to address potential concerns from reviewers.

\paragraph{Residual Modeling and OKT-Centered Representation.}
Let $M=AB$ be a learnable rank-$k$ residual with
\begin{equation}
A\in\mathbb{R}^{\mathrm{oc}\times k}, \qquad
B\in\mathbb{R}^{k\times\mathrm{ic}}, \qquad
k\ll\min\{\mathrm{oc},\mathrm{ic}\}.
\end{equation}
The corrected weight is $W_{\mathrm{res}} = W - M$. The OKT-centered representation is defined by
\begin{equation}
T_R(U) = (U R)\,H,
\qquad 
H = I - \tfrac{1}{m}\mathbf{1}\mathbf{1}^\top,
\end{equation}
which performs orthogonal mixing followed by row-wise centering. The GMM likelihood is evaluated on
\begin{equation}
X = T_R(W - M).
\end{equation}
Because $M$ contributes only a deterministic affine shift and centering preserves symmetry, the EM updates for the GMM parameters remain unchanged:
\begin{equation}
L_{\mathrm{GMM}}(\Theta^{(t+1)};R,A,B)
\le
L_{\mathrm{GMM}}(\Theta^{(t)};R,A,B).
\end{equation}

\paragraph{Adjoint of the OKT Transform.}
Let $G_X = \partial L / \partial X$ be the gradient in the OKT-centered space. Perturbing the residual gives
\begin{equation}
\delta X = -T_R(\delta M) = -(\delta M\,R)H.
\end{equation}
The variation of the loss is
\begin{equation}
\delta L
= \langle G_X,\delta X \rangle_F
= -\langle G_X, (\delta M R)H\rangle_F.
\end{equation}
Using the Frobenius inner-product identity $\langle A,BC\rangle=\langle B^\top A, C\rangle$, we obtain
\begin{equation}
\delta L
= -\langle G_X H^\top R^\top, \delta M\rangle_F.
\end{equation}
Thus the adjoint operator of $T_R$ is
\begin{equation}
G = T_R^{*}(G_X) = G_X H^\top R^\top,
\end{equation}
and the first-order approximation of $L(W-M)$ around $M_t$ is
\begin{equation}
L(W-M) \approx L(W-M_t) + \langle G, M - M_t \rangle.
\end{equation}

\paragraph{Need for a Proximal Majorizer.}
The GMM likelihood contains log-sum-exp terms and is non-convex. Gradient descent is unstable near steep regions when variances approach their lower bound. We therefore construct a proximal majorizer,
\begin{equation}
\begin{split}
Q_{\mu}(M \mid M_t)
=&
L(W-M_t) \\
+ &\langle G,\,M-M_t\rangle
+ \frac{\mu}{2}\|M-M_t\|_F^2,
\end{split}
\end{equation}
where $\mu>0$ is chosen such that
\begin{equation}
\begin{split}
Q_{\mu}(M_t|M_t)&=L(W-M_t),
\\
Q_{\mu}(M|M_t)&\ge L(W-M) \ \text{locally}.
\end{split}
\end{equation}
The existence of such $\mu$ follows from Lipschitz continuity of $\nabla_M L$, which is guaranteed by enforcing $\sigma_i^2 \ge \sigma_{\min}^2$.

\paragraph{Rank-Constrained Proximal Update.}
The next iterate is obtained by minimizing $Q_{\mu}$ under the rank constraint:
\begin{equation}
M_{t+1}
= \arg\min_{\operatorname{rank}(M)\le k}
Q_{\mu}(M \mid M_t).
\end{equation}
Dropping constants and defining the proximal gradient point
\begin{equation}
Y_t = M_t - \tfrac{1}{\mu}G,
\end{equation}
the subproblem becomes
\begin{equation}
M_{t+1}
=
\arg\min_{\operatorname{rank}(M)\le k}
\|M - Y_t\|_F^2.
\end{equation}

\paragraph{Closed-Form Solution: Truncated SVD.}
By the Eckart--Young--Mirsky theorem, the optimal solution is given by the rank-$k$ truncated SVD of $Y_t$:
\begin{equation}
Y_t = U_k \Sigma_k V_k^\top,
\qquad
M_{t+1} = U_k \Sigma_k V_k^\top.
\end{equation}
Any valid factorization may be used; a stable choice is
\begin{equation}
A_{t+1}=U_k \Sigma_k^{1/2},
\qquad
B_{t+1}=\Sigma_k^{1/2} V_k^\top.
\end{equation}

\paragraph{Monotonicity Guarantee.}
Since $M_{t+1}$ minimizes $Q_\mu$,
\begin{equation}
Q_{\mu}(M_{t+1}|M_t)
\le
Q_{\mu}(M_t|M_t)
= L(W-M_t).
\end{equation}
Because $Q_\mu$ majorizes $L$,
\begin{equation}
L(W-M_{t+1})
\le
Q_{\mu}(M_{t+1}|M_t)
\le
L(W-M_t),
\end{equation}
ensuring monotonic descent.

\paragraph{Computational Complexity.}
The cost is dominated by computing the rank-$k$ truncated SVD of $Y_t$, which can be done efficiently with randomized SVD in
\begin{equation}
O(k\,\mathrm{nnz}(Y_t))\ \text{time}.
\end{equation}
Since $k$ is small (e.g., rank raito = 0.005), PSP adds negligible overhead relative to LLM forward passes.

\paragraph{Modeling Assumptions and Robustness.}
\begin{itemize}
\item The low-rank structure matches the empirical observation that post-OKT residuals lie in low-dimensional subspaces.
\item Symmetry is preserved because centering commutes with $M$.
\item The proximal term stabilizes optimization near steep likelihood regions.
\item PSP does not affect functional equivalence; it modifies only the weight geometry prior to quantization.
\end{itemize}
Altogether, PSP provides a principled, low-cost refinement that eliminates structured residual energy not removable by orthogonal transformations, ensuring a more pronounced bimodal distribution favorable for binarization.

\subsubsection{Why Per-Channel Double-Bell Distributions Are Better for Binarization}
\label{app:double-bell-binary}

In this subsection we connect the OKT--PSP optimization described in
Sections~\ref{app:okt-em-mm} and~\ref{app:psp-details} with the binarization
error of each channel. The key observation is that, under standard
per-channel binarization, the optimal approximation error of a row depends
only on the variance of its magnitudes. We then show that the symmetric GMM
objective in Eq.~\eqref{eq:app-lgmm}, together with its MM surrogate
Eq.~\eqref{eq:app-mm-surrogate} and the PSP refinement, monotonically reduces
this variance by driving entries toward a symmetric double-bell pattern
centered at $\pm c_i$.

\paragraph{Per-channel binarization error as magnitude variance.}

Consider a single row $w\in\mathbb{R}^m$ and its per-channel binary
approximation of the form
\begin{equation}
    \hat{w}_j = \alpha\,\mathrm{sign}(w_j),
    \qquad j=1,\dots,m,
\end{equation}
where the scale $\alpha>0$ is shared within the row.
Define $s_j=\mathrm{sign}(w_j)$ and $a_j = |w_j|$ so that $w_j = s_j a_j$.
The per-channel squared error is
\begin{equation}
    \mathcal{E}_w(\alpha)
    = \sum_{j=1}^m (w_j - \hat{w}_j)^2
    = \sum_{j=1}^m (a_j - \alpha)^2.
\end{equation}
Hence the unique minimizer is the sample mean of magnitudes
\begin{equation}
    \alpha^\star
    = \frac{1}{m}\sum_{j=1}^m a_j
    \;\triangleq\; \bar{a},
\end{equation}
and the corresponding minimal error is
\begin{equation}
    \mathcal{E}_w^\star
    = \sum_{j=1}^m (a_j - \bar{a})^2
    = m\,\mathrm{Var}(a_j).
\end{equation}
Thus, for a fixed sign pattern, the per-channel binary approximation error is
completely determined by the variance of magnitudes within the row: the more
concentrated the magnitudes, the smaller the binarization error.

As a special case, if a row takes values only in $\{\pm c\}$ with $c>0$, then
all magnitudes equal $c$, $\mathrm{Var}(a_j)=0$, and the optimal binary
approximation exactly recovers $w$. This is precisely the ideal
double-bell pattern we aim for.

\paragraph{Effect of OKT: GMM-driven tightening around $\pm c_i$.}

Section~\ref{app:okt-em-mm} defines, for a fixed orthogonal $R$, the
centered rows
\begin{equation}
    x_i = (w_i R) - \tfrac{1}{m}\mathbf{1}\mathbf{1}^\top(w_i R),
    \qquad i=1,\dots,n,
\end{equation}
and models the entries $\{x_{ij}\}_{j=1}^m$ using the symmetric GMM
in Eq.~\eqref{eq:app-symm-gmm} with rowwise parameters
$\Theta_i=\{c_i,\sigma_i^2\}$. The normalized negative log-likelihood
$L_{\mathrm{GMM}}(R,\Theta)$ is given in Eq.~\eqref{eq:app-lgmm}. For fixed
$R$, the EM updates in Eqs.~\eqref{eq:app-ci-update}–\eqref{eq:app-sigma-update}
monotonically decrease $L_{\mathrm{GMM}}$ with respect to $\Theta$.

The gradient of $L_{\mathrm{GMM}}$ w.r.t.\ the scalar entries $x_{ij}$,
Eq.~\eqref{eq:app-grad-x}, can be rewritten as
\begin{equation}
    \frac{\partial L_{\mathrm{GMM}}}{\partial x_{ij}}
    = \frac{1}{nm\,\sigma_i^2}
      \Big[
        r_{ij}^{+}(x_{ij}-c_i)
        + (1-r_{ij}^{+})(x_{ij}+c_i)
      \Big].
\end{equation}
This expression shows that $x_{ij}$ is attracted toward the closer center
$\pm c_i$ under the soft assignments $r_{ij}^+,1-r_{ij}^+$: when
$x_{ij}>0$, the gradient predominantly involves $(x_{ij}-c_i)$; when
$x_{ij}<0$, it predominantly involves $(x_{ij}+c_i)$.

To enable tractable updates for the Kronecker factors $R_1,R_2$, we
upper bound the GMM loss by the MM surrogate in
Eq.~\eqref{eq:app-mm-surrogate}, which, after reshaping, becomes
Eq.~\eqref{eq:app-mm-matrix} in matrix form:
\begin{equation}
    \widetilde{L}(R_1,R_2)
    = \frac{1}{nm}\sum_{i=1}^n \lambda_i
    \big\|
        R_2^\top V_{\mathrm{mat},i} R_1 - M_i
    \big\|_F^2,
\end{equation}
where $\lambda_i = \sigma_i^{-2}$ and the target matrices $M_i$ encode the
posterior means $m_{ij}=(2r_{ij}^+ - 1)c_i$ in Eq.~\eqref{eq:app-mij}. Each
Procrustes step
\eqref{eq:app-R1-update}–\eqref{eq:app-R2-update} chooses $(R_1,R_2)$ to make
$R_2^\top V_{\mathrm{mat},i} R_1$ as close as possible to $M_i$ in Frobenius
norm.

Collecting these pieces, the alternating EM--MM scheme over $(\Theta,R)$
jointly:
\begin{itemize}
    \item pulls the transformed entries $x_{ij}$ toward the row-specific
          centers $\pm c_i$ (via Eq.~\eqref{eq:app-grad-x}), and
    \item rotates the coordinate system (via the Kronecker $R_1\otimes R_2$)
          so that the centered rows become better aligned with the
          double-bell targets $M_i$ (via Eq.~\eqref{eq:app-mm-matrix}).
\end{itemize}
As a result, the magnitudes $|x_{ij}|$ concentrate around $c_i$, reducing
their empirical variance and hence lowering the per-channel optimal
binarization error in the rotated coordinates. Orthogonal invariance of the
L$_2$ error then implies the same reduction for the original rows $w_i$.

\paragraph{Effect of PSP: low-rank refinement of double-bell structure.}

Section~\ref{app:psp-details} introduces a low-rank residual
$M = AB$ and defines the corrected weights $W_{\mathrm{res}} = W - M$, with
the OKT-centered representation
\begin{equation}
    X = T_R(W-M),
\end{equation}
where $T_R(\cdot)$ denotes the orthogonal mixing and row-centering. The
GMM loss $L_{\mathrm{GMM}}$ is now evaluated on $X$, and the gradient with
respect to $X$ is propagated back through the adjoint operator
$T_R^*$ to obtain $G = \partial L/\partial M$.

To ensure stable and monotone updates on $M$ under the rank constraint
$\operatorname{rank}(M)\le k$, a proximal majorizer $Q_\mu(M\mid M_t)$
is constructed around the current iterate $M_t$. The next iterate
$M_{t+1}$ is obtained as the best rank-$k$ approximation of the proximal
gradient point
\begin{equation}
    Y_t = M_t - \tfrac{1}{\mu}G,
\end{equation}
via truncated SVD:
\begin{equation}
    Y_t = U_k \Sigma_k V_k^\top,
    \qquad
    M_{t+1} = U_k \Sigma_k V_k^\top.
\end{equation}
By construction,
\begin{equation}
\begin{split}
    L(W-M_{t+1})
    &\le Q_\mu(M_{t+1}\mid M_t)\\
    &\le Q_\mu(M_t\mid M_t)
    = L(W-M_t),
\end{split}
\end{equation}
so each PSP step monotonically decreases the same loss that measures
deviation from the ideal double-bell structure.

Intuitively, PSP absorbs structured residual components that still push
entries away from the learned centers $\pm c_i$ after OKT, but does so in a
rank-constrained fashion. This provides an additional tightening of the
magnitude distribution $|x_{ij}|$ around $c_i$, further reducing the
per-channel binarization error while leaving the layer's functional mapping
intact.

\paragraph{Summary of the double-bell advantage under OKT+PSP.}

Putting everything together:
\begin{itemize}
    \item For each row, the optimal per-channel binary approximation error
          equals $m$ times the variance of its magnitudes.
    \item The OKT EM--MM procedure (Section~\ref{app:okt-em-mm}) explicitly
          drives each centered row toward a symmetric double-bell pattern at
          $\pm c_i$ by minimizing the GMM loss $L_{\mathrm{GMM}}$ and its
          MM surrogate, thereby reducing the variance of magnitudes and the
          binarization error.
    \item The PSP refinement (Section~\ref{app:psp-details}) monotonically
          decreases the same loss under a low-rank constraint, eliminating
          structured residual energy that is misaligned with the double-bell
          geometry and further lowering the variance.
\end{itemize}
Therefore, within our orthogonally equivalent representations of a given
Gaussian-like row, the OKT+PSP optimization explicitly seeks those
representations whose entries cluster tightly around $\pm c_i$, i.e., those
with per-channel double-bell distributions that are provably more amenable to
binarization.

%% file: appendix/A.3.tex
\subsection{More Experimental Results}

\subsubsection{More Detailed Results}
\label{app:detailed main results}
In this section, we provide the full expanded results corresponding to Table~\ref{tab:main_results} in the main paper.
When activations remain in full precision (16-bit), the detailed breakdown in Table~\ref{tab:app:tab1} shows that classical weight-only quantization methods such as GPTQ and OSTQuant consistently experience substantial accuracy degradation across all model scales. More advanced approaches like BiLLM and ARB-LLM are relatively more stable, yet still show noticeable drops—particularly on reasoning-heavy benchmarks such as ARC-C and HellaSwag. In contrast, our BWLA method achieves strong and uniform improvements across all models, recovering 10–25 points compared with ARB-LLM and 20–40 points compared with GPTQ/OSTQ. These results indicate that even when activations are not quantized, BWLA effectively reshapes the weight distribution into a more quantization-friendly geometry, enabling high-fidelity binarization without sacrificing semantic reasoning.

When activations are further quantized to a more challenging 6-bit regime, the performance disparities become significantly amplified, as shown in Table~\ref{tab:app:tab2}. Under this setting, GPTQ and OSTQuant almost completely collapse, typically falling to the 20–30\% accuracy range, while BiLLM and ARB-LLM display partial robustness but still suffer severe degradation of 20–40 points across multiple tasks and models. DBellQuant also exhibits unstable behavior and fails to provide complete results across models. In sharp contrast, BWLA maintains strong and reliable performance across all architectures, delivering large absolute gains over existing methods and narrowing the FP16 gap by 30+ points on several benchmarks. The resilience of BWLA in this low-precision activation setting highlights its ability to jointly stabilize weight–activation interactions—precisely where prior approaches tend to collapse.

Taken together, these detailed results clearly demonstrate the superior robustness and scalability of BWLA. Unlike existing PTQ methods that quickly degrade as activation precision decreases, BWLA remains consistently strong across both 16-bit and 6-bit activation regimes, providing reliable, high-quality quantization for a broad spectrum of model sizes.

\input{tab/app_tab_1}

\input{tab/app_tab_2}

\subsubsection{Ablation study on Calibration Data}
\label{app:dataset}

\begin{figure*}[t!]
    \centering
    \includegraphics[width=0.9\linewidth]{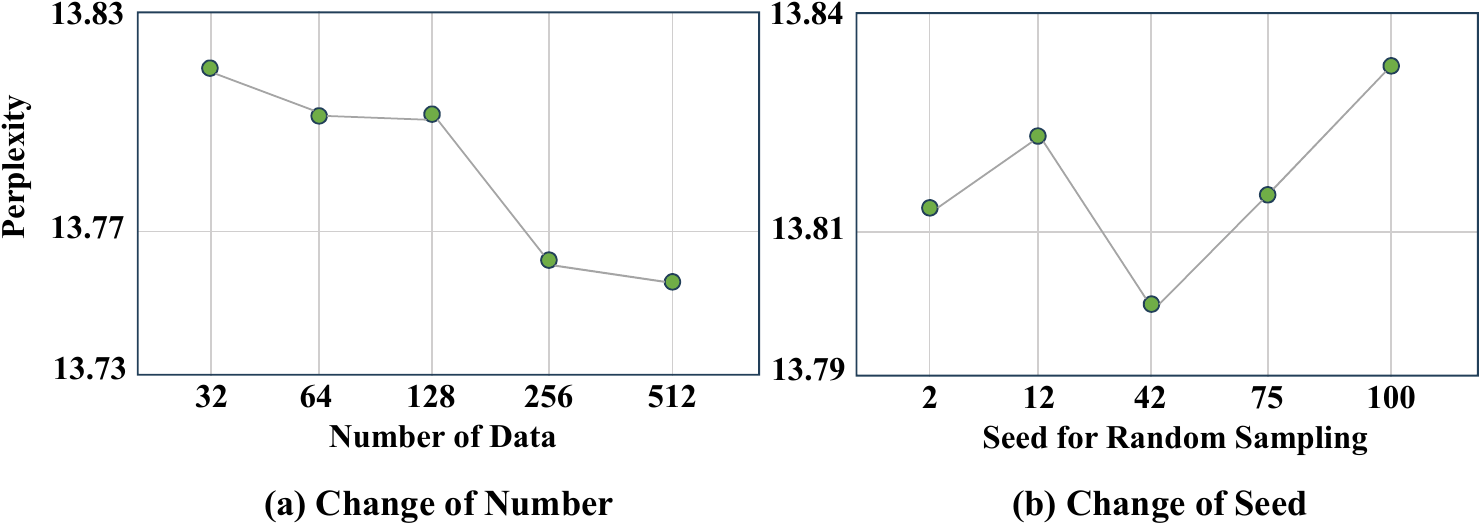}
    \caption{Perplexity of Qwen3-14B using calibration data sampled with different number or seeds from WikiText2.}
    \label{fig:dataset}
\end{figure*}

We further investigated the impact of calibration data on BWLA. Specifically, when binarizing Qwen3-14B on WikiText-2 (under 6-bit activation setting), we fixed all quantization hyperparameters and varied the calibration set by (i) subsampling different numbers of calibration samples and (ii) resampling with multiple random seeds that control both sample selection and token order. For each setting, we re-quantized once and evaluated perplexity on a held-out split. Across all sizes and seeds, the resulting perplexity fluctuates by $<$1\% relative to the mean. As shown in Figure~\ref{fig:dataset}, the curves remain nearly flat as calibration size increases, and seed-wise traces largely overlap, indicating that even our smallest tested calibration subsets perform on par with larger ones. These observations confirm that BWLA is highly robust to calibration data selection.

\subsubsection{Loss Curve of OKT and PSP}
\label{app:loss}

\begin{figure*}
    \centering
    \includegraphics[width=1\linewidth]{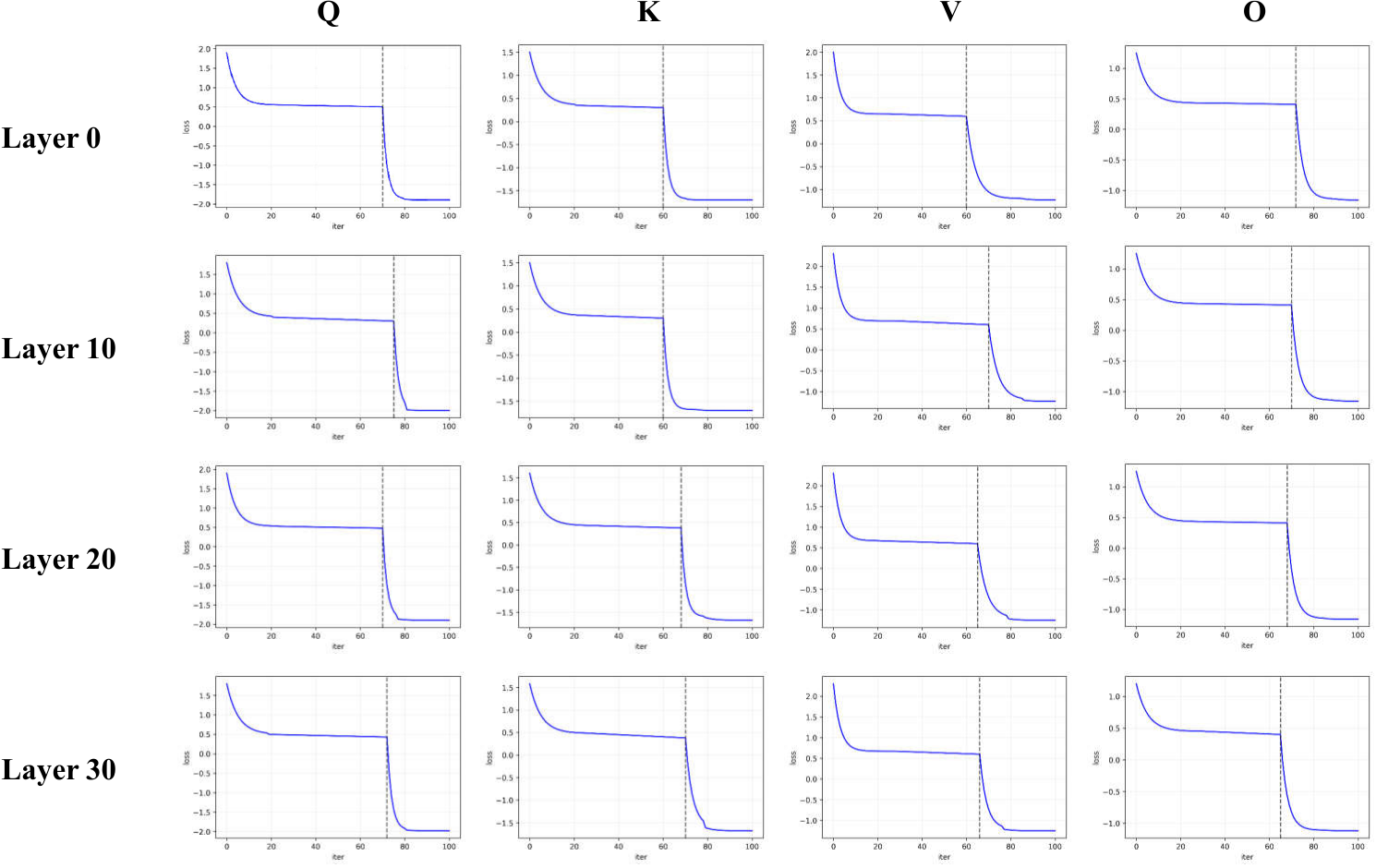}
    \caption{Loss trajectories of the OKT and PSP optimization procedures for the Q, K, V, and O projections in Layers 0, 10, 20, and 30 of LLaMA3-8B. }
    \label{fig:loss_qkvo}
\end{figure*}

\begin{figure*}
    \centering
    \includegraphics[width=0.9\linewidth]{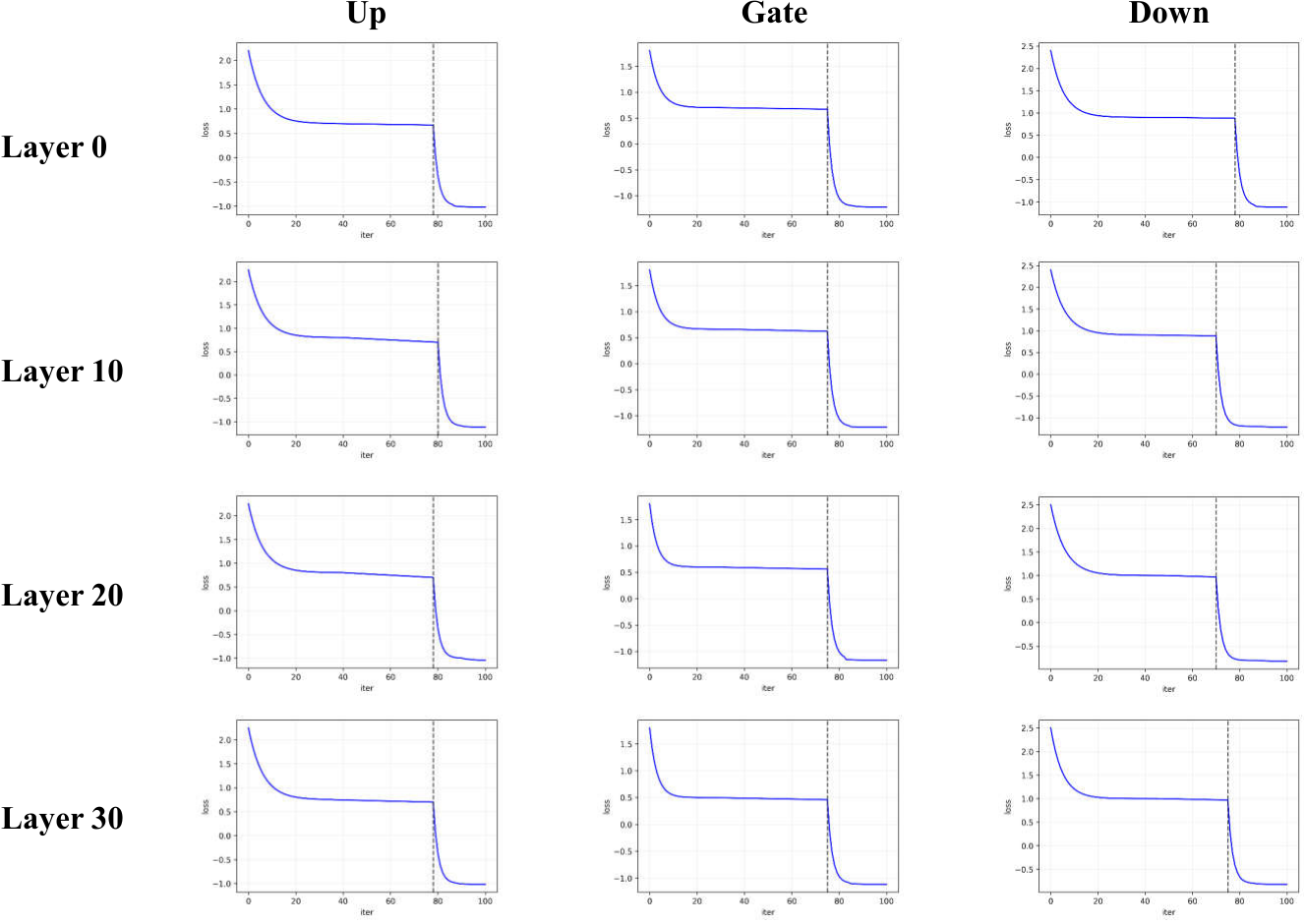}
    \caption{Loss trajectories of the OKT and PSP optimization procedures for the up, gate, and down projections in the MLP blocks of Layers 0, 10, 20, and 30 of LLaMA3-8B.}
    \label{fig:loss_mlp}
\end{figure*}

In this section, we provide the loss trajectories observed during the iterative optimization of the proposed OKT and PSP modules.
As illustrated in Figure~\ref{fig:loss_qkvo} (loss curves of the Q, K, V, and O projections in Layers 0, 10, 20, and 30 of the LLaMA3-8B self-attention blocks) and Figure~\ref{fig:loss_mlp} (loss curves of the up, gate, and down projections in the corresponding MLP blocks), the gray dashed line marks the transition between the two optimization stages: the region to the left corresponds to the OKT phase, while the region to the right corresponds to the PSP phase.
Across all layers and projection types, the loss trajectories exhibit a highly consistent pattern:
the OKT module typically reaches its turning point or enters a stable region within roughly 30 iterations, while PSP converges even faster, usually within 10–15 iterations. Motivated by this observation, we set the total number of optimization steps to 60 throughout our experiments, allocating 40 iterations to OKT and 20 iterations to PSP, ensuring reliable convergence for both stages. These findings demonstrate that, even for the larger transformation matrices within the MLP blocks, both components of our method maintain fast, stable, and well-behaved convergence dynamics across the entire model, providing a robust foundation for the overall quantization procedure.

Table~\ref{tab:quant_time} presents a comparison of the training time required by our method and two representative training-based quantization approaches—OSTQuant, which learns rotation-smoothing matrices, and OmniQuant, which employs block reconstruction—across models of varying sizes. It is important to note that the comparison includes only the optimization or training time, excluding the subsequent GPTQ quantization time. Thanks to the training-free nature of BWLA, which avoids gradient backpropagation entirely, our method completes both the OKT and PSP stages within approximately 60 lightweight iterations. As shown in the table~\ref{tab:quant_time}, BWLA consistently delivers substantial speed advantages over training-based counterparts. For instance, on LLaMA2-7B, BWLA achieves more than a 5× speedup compared to OSTQuant; even on the much larger LLaMA2-70B, the improvement remains above 3×. These results demonstrate that BWLA dramatically reduces optimization cost even at large model scales, offering a far more efficient and scalable alternative to existing training-based quantization methods.

\input{tab/app_tab_3}

%% file: tab/app_tab_1.tex
\begin{table*}[t!]
\centering
\caption{Zero-shot accuracy on Arc-Challenge (AC), Arc-Easy (AE), HellaSwag (HS), LAMBADA-openai (LO), LAMBADA-standard (LS), PIQA (PQ), and WinoGrande (WG) under a \textbf{16-bit activation} quantization setting.}
\label{tab:app:tab1}
\resizebox{\linewidth}{!}{
\tablestyle{3pt}{1.12}
\begin{tabular}{clcccccccccc}
\toprule
\textbf{Model} & \textbf{Method} & \textbf{\#Bits(W)} & \textbf{\#Bits(A)} & \textbf{AE}$\uparrow$ & \textbf{AC}$\uparrow$ & \textbf{HS}$\uparrow$ & \textbf{LO}$\uparrow$ & \textbf{LS}$\uparrow$ & \textbf{PQ}$\uparrow$ & \textbf{WG}$\uparrow$ & \textbf{$\text{Avg.}^7$}$\uparrow$  \\
\midrule
\multirow{7}{*}{LLaMA2-7B} & FP16        & 16   & \multirow{7}{*}{16} & 74.66 & 46.25 & 75.96 & 73.45 & 68.19 & 78.73 & 69.22 & 69.49  \\
\cdashline{2-12}
& GPTQ        & 2    &  & 27.15 & 27.39 & 25.89 & 0.00 & 0.00 & 49.84 & 50.04 & 25.76  \\
& OSTQuant    & 1    &  & 28.15 & 28.12 & 26.33 & 1.31 & 0.88 & 50.81 & 50.12 & 26.53  \\
& BiLLM       & 1.08 &  & 35.48 & 23.55 & 35.18 & 20.30 & 18.03 & 57.83 & 52.49 & 34.69  \\
& ARB-LLM     & 1.08 &  & 40.36 & 24.57 & 36.31 & 22.26 & 18.79 & 58.92 & 55.96 & 36.74  \\
& DBellQuant  & 1.09 &  & 42.85 & 23.89 & 34.82 & --   & --   & 63.98 & 56.27 & --     \\
& \textbf{BWLA} & 1.19 &  & \textbf{54.29} & \textbf{30.80} & \textbf{53.36} & \textbf{53.72} & \textbf{46.21} & \textbf{67.41} & \textbf{61.40} & \textbf{52.46}  \\
\midrule

\multirow{7}{*}{LLaMA2-13B} & FP16        & 16   & \multirow{7}{*}{16} & 77.57 & 49.15 & 79.39 & 76.71 & 70.06 & 80.47 & 71.98 & 72.19  \\
\cdashline{2-12}
& GPTQ        & 2    &  & 24.92 & 28.58 & 25.85 & 0.00 & 0.00 & 49.02 & 50.04 & 25.49  \\
& OSTQuant    & 1    &  & 26.90 & 28.98 & 26.81 & 0.00 & 0.00 & 50.12 & 50.08 & 26.13  \\
& BiLLM       & 1.08 &  & 42.72 & 25.94 & 38.16 & 35.03 & 23.75 & 60.55 & 54.54 & 40.10  \\
& ARB-LLM     & 1.08 &  & 55.30 & 29.01 & 47.33 & 50.88 & 41.24 & 66.92 & 61.64 & 50.33  \\
& DBellQuant  & 1.09 &  & --    & --    & --    & --    & --    & --    & --    & --     \\
& \textbf{BWLA} & 1.19 &  & \textbf{66.84} & \textbf{36.09} & \textbf{63.48} & \textbf{71.32} & \textbf{59.03} & \textbf{74.32} & \textbf{65.59} & \textbf{62.38}  \\
\midrule

\multirow{7}{*}{LLaMA2-70B} & FP16        & 16   & \multirow{7}{*}{16} & 81.02 & 57.34 & 83.78 & 79.57 & 74.67 & 82.70 & 77.90 & 76.71  \\
\cdashline{2-12}
& GPTQ        & 2    &  & 25.33 & 28.12 & 25.99 & 0.00 & 0.02 & 49.02 & 49.80 & 25.47  \\
& OSTQuant    & 1    &  & 26.22 & 29.10 & 26.88 & 0.00 & 0.00 & 50.16 & 50.33 & 26.10  \\
& BiLLM       & 1.08 &  & 63.22 & 38.91 & 57.71 & 46.52 & 36.31 & 67.74 & 66.22 & 53.80  \\
& ARB-LLM     & 1.08 &  & 68.94 & 41.13 & 63.97 & 68.19 & 60.80 & 73.01 & 69.77 & 63.69  \\
& DBellQuant  & 1.09 &  & --    & --    & --    & --    & --    & --    & --    & --     \\
& \textbf{BWLA} & 1.19 &  & \textbf{77.82} & \textbf{51.54} & \textbf{75.90} & \textbf{81.95} & \textbf{75.84} & \textbf{78.78} & \textbf{74.66} & \textbf{73.78}  \\
\midrule

\multirow{7}{*}{LLaMA3-8B} & FP16        & 16   & \multirow{7}{*}{16} & 77.90 & 52.82 & 79.07 & 75.63 & 68.58 & 80.63 & 72.93 & 72.51  \\
\cdashline{2-12}
& GPTQ        & 2    &  & 25.51 & 26.11 & 26.18 & 0.00 & 0.00 & 52.50 & 49.33 & 25.66  \\
& OSTQuant    & 1    &  & 26.50 & 27.18 & 26.53 & 0.00 & 0.00 & 49.78 & 50.08 & 25.72  \\
& BiLLM       & 1.06 &  & 31.10 & 21.93 & 32.52 & 11.59 & 11.66 & 52.29 & 52.80 & 30.56  \\
& ARB-LLM     & 1.06 &  & 41.92 & 24.83 & 36.60 & 28.66 & 19.99 & 59.14 & 56.51 & 38.24  \\
& DBellQuant  & 1.06 &  & 44.11 & 19.97 & 33.19 & --    & --    & 62.35 & 55.96 & --     \\
& \textbf{BWLA} & 1.16 &  & \textbf{52.27} & \textbf{30.80} & \textbf{51.26} & \textbf{45.10} & \textbf{38.56} & \textbf{63.44} & \textbf{60.22} & \textbf{48.81}  \\
\midrule

\multirow{7}{*}{Qwen3-8B} & FP16        & 16   & \multirow{7}{*}{16} & 80.93 & 56.74 & 74.98 & 64.18 & 61.11 & 77.37 & 68.35 & 69.09  \\
\cdashline{2-12}
& GPTQ        & 2    &  & 26.35 & 26.62 & 25.67 & 0.00 & 0.00 & 49.29 & 48.62 & 25.22  \\
& OSTQuant    & 1    &  & 26.88 & 26.96 & 25.93 & 0.00 & 0.00 & 49.41 & 49.69 & 25.55  \\
& BiLLM       & 1.06 &  & 39.31 & 24.06 & 36.28 & 15.99 & 12.05 & 57.13 & 51.14 & 33.71  \\
& ARB-LLM     & 1.06 &  & 49.45 & 28.07 & 42.74 & 33.59 & 25.64 & 62.62 & 55.49 & 42.51  \\
& DBellQuant  & 1.08 &  & --    & --    & --    & --    & --    & --    & --    & --     \\
& \textbf{BWLA} & 1.18 &  & \textbf{56.61} & \textbf{36.09} & \textbf{53.19} & \textbf{48.17} & \textbf{42.65} & \textbf{67.74} & \textbf{60.30} & \textbf{52.11}  \\
\midrule

\multirow{7}{*}{Qwen3-14B} & FP16        & 16   & \multirow{7}{*}{16} & 83.08 & 60.49 & 78.82 & 67.84 & 64.47 & 79.76 & 72.85 & 72.47  \\
\cdashline{2-12}
& GPTQ        & 2    &  & 25.04 & 26.45 & 26.38 & 0.00 & 0.00 & 50.54 & 50.20 & 25.52  \\
& OSTQuant    & 1    &  & 25.88 & 26.73 & 26.70 & 0.00 & 0.00 & 50.88 & 50.21 & 25.77  \\
& BiLLM       & 1.06 &  & 57.45 & 33.87 & 51.28 & 41.39 & 38.93 & 65.72 & 63.38 & 50.29  \\
& ARB-LLM     & 1.06 &  & 60.19 & 36.60 & 51.95 & 48.88 & 44.09 & 66.97 & 63.22 & 53.13  \\
& DBellQuant  & 1.08 &  & --    & --    & --    & --    & --    & --    & --    & --     \\
& \textbf{BWLA} & 1.18 &  & \textbf{68.98} & \textbf{45.31} & \textbf{62.68} & \textbf{64.00} & \textbf{56.36} & \textbf{73.01} & \textbf{68.19} & \textbf{62.65}  \\
\midrule

\multirow{7}{*}{Qwen3-32B} & FP16        & 16   & \multirow{7}{*}{16} & 83.21 & 61.09 & 82.60 & 67.24 & 58.04 & 81.99 & 72.77 & 72.42  \\
\cdashline{2-12}
& GPTQ        & 2    &  & 24.28 & 25.68 & 26.31 & 0.00 & 0.00 & 52.67 & 49.64 & 25.51  \\
& OSTQuant    & 1    &  & 25.61 & 26.83 & 26.94 & 0.00 & 0.00 & 51.78 & 50.64 & 25.97  \\
& BiLLM       & 1.06 &  & 62.25 & 42.15 & 62.75 & 56.63 & 49.23 & 70.62 & 63.54 & 58.17  \\
& ARB-LLM     & 1.06 &  & 74.75 & 51.11 & 65.05 & 66.72 & 60.37 & 72.20 & 69.46 & 65.67  \\
& DBellQuant  & 1.08 &  & --    & --    & --    & --    & --    & --    & --    & --     \\
& \textbf{BWLA} & 1.18 &  & \textbf{73.69} & \textbf{50.33} & \textbf{71.46} & \textbf{71.14} & \textbf{64.87} & \textbf{76.12} & \textbf{70.40} & \textbf{68.29}  \\
\bottomrule
\end{tabular}}
\end{table*}

%% file: tab/app_tab_2.tex
\begin{table*}[h!]
\centering
\caption{Zero-shot accuracy on Arc-Challenge (AC), Arc-Easy (AE), HellaSwag (HS), LAMBADA-openai (LO), LAMBADA-standard (LS), PIQA (PQ), and WinoGrande (WG) under a \textbf{6-bit activation} quantization setting.}
\label{tab:app:tab2}
\resizebox{\linewidth}{!}{
\tablestyle{3pt}{1.12}
\begin{tabular}{clcccccccccc}
\toprule
\textbf{Model} & \textbf{Method} & \textbf{\#Bits(W)} & \textbf{\#Bits(A)} & \textbf{AE}$\uparrow$ & \textbf{AC}$\uparrow$ & \textbf{HS}$\uparrow$ & \textbf{LO}$\uparrow$ & \textbf{LS}$\uparrow$ & \textbf{PQ}$\uparrow$ & \textbf{WG}$\uparrow$ & \textbf{$\text{Avg.}^7$}$\uparrow$  \\
\midrule
\multirow{7}{*}{LLaMA2-7B} 
& FP16        & 16   & 16 & 74.66 & 46.25 & 75.96 & 73.45 & 68.19 & 78.73 & 69.22 & 69.49  \\
\cdashline{2-12}
& GPTQ        & 2    & \multirow{6}{*}{6} & 25.63 & 28.07 & 25.56 & 0.00 & 0.00 & 51.52 & 50.04 & 25.83  \\
& OSTQuant    & 1    &  & 26.11 & 27.01 & 25.33 & 0.00 & 0.00 & 50.52 & 50.01 & 25.57  \\
& BiLLM       & 1.08 &  & 34.01 & 23.38 & 31.69 & 17.43 & 13.02 & 54.46 & 50.75 & 32.11  \\
& ARB-LLM     & 1.08 &  & 36.36 & 22.70 & 33.98 & 23.21 & 18.71 & 56.31 & 52.88 & 34.88  \\
& DBellQuant  & 1.09 &  & 37.50 & 22.18 & 31.94 & --    & --    & 61.10 & 53.85 & 41.31  \\
& \textbf{BWLA} & 1.19 &  & \textbf{45.79} & \textbf{26.02} & \textbf{43.49} & \textbf{32.19} & \textbf{28.49} & \textbf{62.40} & \textbf{55.96} & \textbf{42.05}  \\
\midrule

\multirow{7}{*}{LLaMA2-13B} 
& FP16        & 16   & 16 & 77.57 & 49.15 & 79.39 & 76.71 & 70.06 & 80.47 & 71.98 & 72.19  \\
\cdashline{2-12}
& GPTQ        & 2    & \multirow{6}{*}{6} & 25.93 & 28.58 & 26.44 & 0.00 & 0.02 & 49.02 & 49.80 & 25.68  \\
& OSTQuant    & 1    &  & 25.88 & 28.53 & 25.44 & 0.00 & 0.00 & 50.12 & 50.08 & 25.72  \\
& BiLLM       & 1.08 &  & 38.80 & 23.63 & 31.77 & 21.37 & 17.41 & 58.16 & 51.85 & 34.71  \\
& ARB-LLM     & 1.08 &  & 45.03 & 25.77 & 38.75 & 36.66 & 32.16 & 62.46 & 55.17 & 42.29  \\
& DBellQuant  & 1.09 &  & --    & --    & --    & --    & --    & --    & --    & --     \\
& \textbf{BWLA} & 1.19 &  & \textbf{64.52} & \textbf{37.88} & \textbf{59.92} & \textbf{67.67} & \textbf{56.24} & \textbf{71.60} & \textbf{62.67} & \textbf{60.07}  \\
\midrule

\multirow{7}{*}{LLaMA2-70B} 
& FP16        & 16   & 16 & 81.02 & 57.34 & 83.78 & 79.57 & 74.67 & 82.70 & 77.90 & 76.71  \\
\cdashline{2-12}
& GPTQ        & 2    & \multirow{6}{*}{6} & 25.11 & 27.43 & 25.01 & 0.00 & 0.00 & 49.71 & 50.01 & 25.32  \\
& OSTQuant    & 1    &  & 25.35 & 28.41 & 25.81 & 0.00 & 0.00 & 50.16 & 50.33 & 25.72  \\
& BiLLM       & 1.08 &  & 30.77 & 24.06 & 32.68 & 12.48 & 7.30  & 52.83 & 49.33 & 29.92  \\
& ARB-LLM     & 1.08 &  & 40.70 & 29.27 & 43.19 & 18.38 & 16.81 & 57.02 & 52.25 & 36.80  \\
& DBellQuant  & 1.09 &  & --    & --    & --    & --    & --    & --    & --    & --     \\
& \textbf{BWLA} & 1.19 &  & \textbf{76.73} & \textbf{51.71} & \textbf{73.82} & \textbf{80.09} & \textbf{73.57} & \textbf{78.02} & \textbf{72.69} & \textbf{72.38}  \\
\midrule

\multirow{7}{*}{LLaMA3-8B} 
& FP16        & 16   & 16 & 77.90 & 52.82 & 79.07 & 75.63 & 68.58 & 80.63 & 72.93 & 72.51  \\
\cdashline{2-12}
& GPTQ        & 2    & \multirow{6}{*}{6} & 24.92 & 25.43 & 26.34 & 0.00 & 0.00 & 52.50 & 49.33 & 25.50  \\
& OSTQuant    & 1    &  & 25.62 & 26.44 & 26.11 & 0.00 & 0.00 & 52.38 & 50.30 & 25.84  \\
& BiLLM       & 1.06 &  & 31.86 & 22.70 & 31.75 & 13.60 & 9.80  & 55.77 & 49.72 & 30.74  \\
& ARB-LLM     & 1.06 &  & 36.91 & 24.06 & 35.52 & 22.84 & 19.23 & 56.96 & 52.64 & 35.45  \\
& DBellQuant  & 1.08 &  & 37.63 & 18.77 & 31.83 & --    & --    & 58.00 & 51.92 & 39.63  \\
& \textbf{BWLA} & 1.16 &  & \textbf{48.15} & \textbf{29.18} & \textbf{47.05} & \textbf{38.25} & \textbf{37.18} & \textbf{62.08} & \textbf{58.64} & \textbf{45.79}  \\
\midrule

\multirow{7}{*}{Qwen3-8B} 
& FP16        & 16   & 16 & 80.93 & 56.74 & 74.98 & 64.18 & 61.11 & 77.37 & 68.35 & 69.09  \\
\cdashline{2-12}
& GPTQ        & 2    & \multirow{6}{*}{6} & 25.08 & 22.70 & 25.04 & 0.00 & 0.00 & 48.69 & 49.01 & 24.36  \\
& OSTQuant    & 1    &  & 25.79 & 22.86 & 25.74 & 0.00 & 0.00 & 49.83 & 50.01 & 24.89  \\
& BiLLM       & 1.06 &  & 29.04 & 22.01 & 27.80 & 3.53  & 2.45  & 50.87 & 49.72 & 26.49  \\
& ARB-LLM     & 1.06 &  & 30.30 & 23.46 & 26.65 & 3.42  & 3.01  & 52.12 & 49.88 & 26.98  \\
& DBellQuant  & 1.08 &  & --    & --    & --    & --    & --    & --    & --    & --     \\
& \textbf{BWLA} & 1.18 &  & \textbf{55.30} & \textbf{34.47} & \textbf{50.81} & \textbf{45.26} & \textbf{40.68} & \textbf{66.49} & \textbf{60.22} & \textbf{50.46}  \\
\midrule

\multirow{7}{*}{Qwen3-14B} 
& FP16        & 16   & 16 & 83.08 & 60.49 & 78.82 & 67.84 & 64.47 & 79.76 & 72.85 & 72.47  \\
\cdashline{2-12}
& GPTQ        & 2    & \multirow{6}{*}{6} & 25.08 & 22.70 & 25.04 & 0.00 & 0.00 & 49.51 & 49.57 & 24.56  \\
& OSTQuant    & 1    &  & 25.75 & 22.99 & 25.62 & 0.00 & 0.00 & 49.93 & 49.58 & 24.84  \\
& BiLLM       & 1.06 &  & 37.37 & 28.75 & 37.83 & 0.39  & 1.05  & 58.05 & 49.25 & 30.38  \\
& ARB-LLM     & 1.06 &  & 34.97 & 27.05 & 32.10 & 0.04  & 0.16  & 56.37 & 52.33 & 29.00  \\
& DBellQuant  & 1.08 &  & --    & --    & --    & --    & --    & --    & --    & --     \\
& \textbf{BWLA} & 1.18 &  & \textbf{64.48} & \textbf{42.41} & \textbf{60.36} & \textbf{60.62} & \textbf{53.89} & \textbf{72.20} & \textbf{66.54} & \textbf{60.07}  \\
\midrule

\multirow{7}{*}{Qwen3-32B} 
& FP16        & 16   & 16 & 83.21 & 61.09 & 82.60 & 67.24 & 58.04 & 81.99 & 72.77 & 72.42  \\
\cdashline{2-12}
& GPTQ        & 2    & \multirow{6}{*}{6} & 26.39 & 24.15 & 24.77 & 0.00 & 0.00 & 49.78 & 46.72 & 24.54  \\
& OSTQuant    & 1    &  & 25.40 & 25.36 & 24.79 & 0.00 & 0.00 & 50.93 & 49.95 & 25.20  \\
& BiLLM       & 1.06 &  & 45.20 & 32.08 & 41.24 & 2.46  & 13.14 & 58.43 & 52.80 & 35.05  \\
& ARB-LLM     & 1.06 &  & 34.01 & 23.38 & 31.69 & 17.43 & 13.02 & 54.46 & 50.75 & 32.11  \\
& DBellQuant  & 1.08 &  & --    & --    & --    & --    & --    & --    & --    & --     \\
& \textbf{BWLA} & 1.18 &  & \textbf{72.88} & \textbf{49.75} & \textbf{71.07} & \textbf{70.85} & \textbf{61.89} & \textbf{75.15} & \textbf{68.59} & \textbf{67.17}  \\
\bottomrule
\end{tabular}}
\end{table*}

%% file: tab/app_tab_3.tex
\begin{table*}[t]
\centering
\caption{Comparison of the optimization time of our method with the full training time required by other quantization approaches across models of different sizes.}
\tablestyle{3pt}{1.3}
\begin{tabular}{ccccc}
\hline
Method      & LLaMA2-7B  & LLaMA3-8B  & LLaMA2-13B  & LLaMA2-70B \\ \hline
Omniquant   & 1.6h & 1.8h & 3.3h  & 9.5h \\
OSTQuant    & 0.3h & 0.4h & 0.8h  & 5.5h \\
\textbf{BWLA} &\textbf{0.10h} &\textbf{0.12h} &\textbf{0.25h} &\textbf{1.4h}\\
Speedup     & 3.0$\times$ & 3.3$\times$ & 3.2$\times$  & 3.9$\times$ \\ \hline
\end{tabular}
\label{tab:quant_time}
\end{table*}

%% file: appendix/A.4.tex
\subsection{Distribution Visualizations}
\label{app:distribution}

In this section, we visualize the distributions of all weight matrices and input activations in Layer 12 of the Qwen3-8B model before and after applying BWLA. As shown in Figure~\ref{fig:weight_visual}, before BWLA the weights follow a Gaussian-like unimodal distribution that is unfavorable for binarization. In particular, the down projection exhibits clear outliers whose absolute values exceed 1. After applying the OKT stage, the weight distributions gradually shift from unimodal to bimodal and the extreme outliers are largely removed. With the subsequent PSP refinement, the weights become fully bimodal and clearly aligned with a binary friendly structure, and no visible outliers remain. For example, in the down matrix the original range between the maximum and minimum values is close to 2, whereas after BWLA the range shrinks to about 0.35. A similar effect is observed on the input activations, where heavy tails are significantly suppressed. These observations provide direct evidence that BWLA effectively reshapes both weights and activations into distributions that are much more amenable to accurate binarization.

Figure~\ref{fig:activation_visual} presents the quantile plots of the activation distributions before and after applying BWLA. Prior to processing, the activations exhibit noticeable extreme values that deviate from the main density region, indicating that their original statistical structure is not well suited for low-bit quantization. After being projected through the orthogonal auxiliary matrix obtained from OKT, these extreme values are substantially suppressed, and the overall distribution becomes more concentrated with a significantly tighter tail. A closer examination also reveals that variations across the token dimension become smoother, with no prominent localized anomalies remaining. This demonstrates that OKT effectively reorganizes the geometric layout of the activations, making them more aligned with the statistical properties required for low-bit activation quantization.
Together with the bimodalization of the weights achieved by BWLA, this improvement in activation distribution jointly contributes to the stability and quantizability of the model under binary weights and ultra–low-bit activations.

\begin{figure*}
    \centering
    \includegraphics[width=\linewidth]{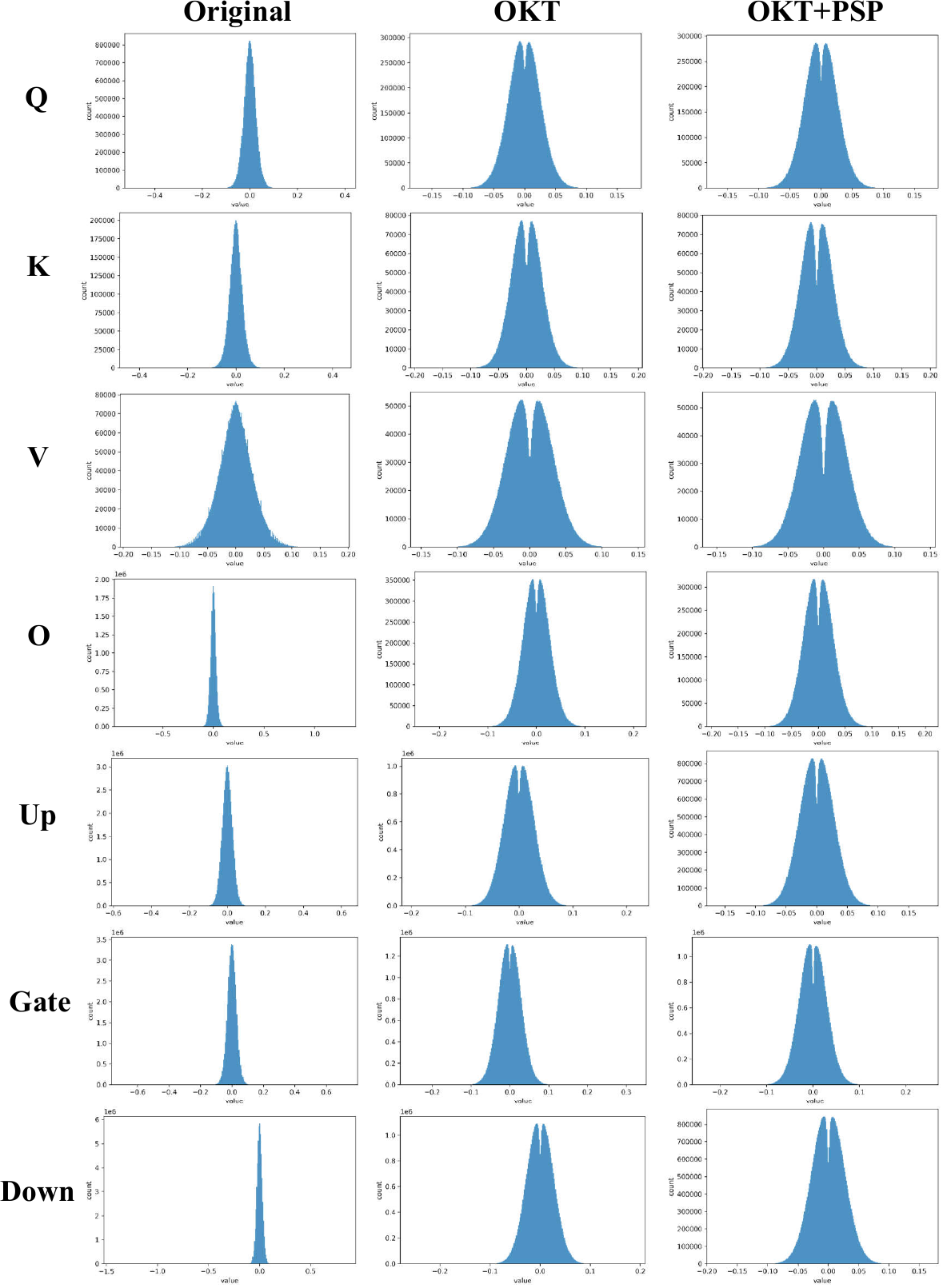}
    \caption{The weight distribution of the 12th layer in Qwen3-8B before and after BWLA.}
    \label{fig:weight_visual}
\end{figure*}

\begin{figure*}
    \centering
    \includegraphics[width=\linewidth]{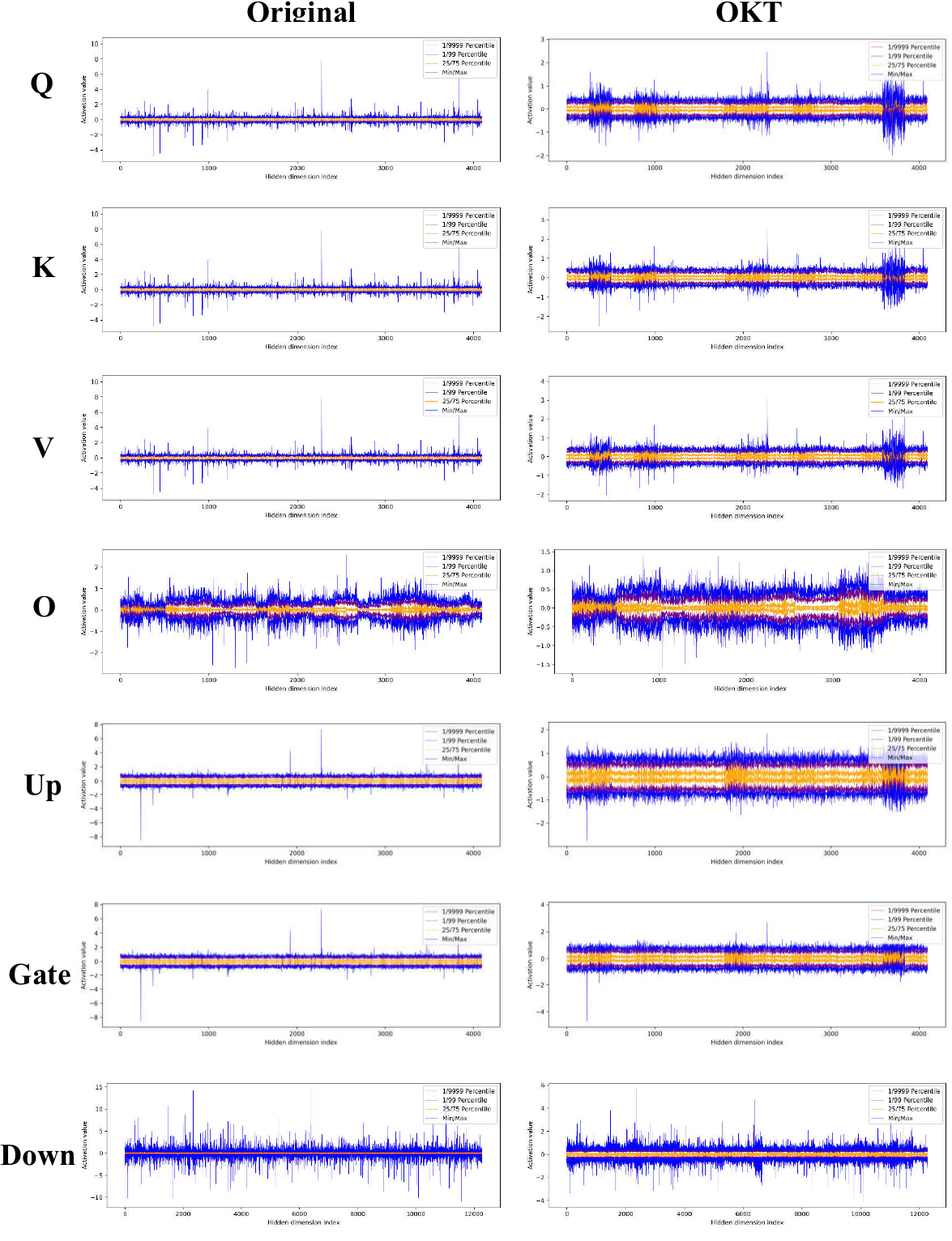}
    \caption{The activation distribution of the 12th layer in Qwen3-8B before and after BWLA.}
    \label{fig:activation_visual}
\end{figure*}